\newtheorem{myDef}{Definition}
\newtheorem{myTheo}{Theorem}
\title{\LARGE \bf
A Right Invariant Extended Kalman Filter for Object based SLAM
}
\author{Yang Song$^{*}$, Zhuqing Zhang$^{\dag}$, Jun Wu$^{\dag}$, Yue Wang$^{\dag}$, Liang Zhao$^{*}$, Shoudong Huang$^{*}$
\thanks{$^{*}$Yang Song, Liang Zhao and Shoudong Huang are with Robotics Institute, University of Technology Sydney, Australia.
      {Email:\tt\small Yang.Song-4@student.uts.edu.au}}%
\thanks{$^{\dag}$Zhuqing Zhang, Jun Wu, and Yue Wang are with the State key Laboratory of Industrial Control and Technology, Zhejiang University, P.R. China.}%
}
\begin{document}

\maketitle
\thispagestyle{empty}
\pagestyle{empty}


\begin{abstract}


With the recent advance of deep learning based object recognition and estimation, it is possible to consider object level SLAM where the pose of each object is estimated in the SLAM process. In this paper, based on a novel Lie group structure, a right invariant extended Kalman filter (RI-EKF) for object based SLAM is proposed. The observability analysis shows that the proposed algorithm automatically maintains the correct unobservable subspace, while standard EKF (Std-EKF) based SLAM algorithm does not. This results in a better consistency for the proposed algorithm comparing to Std-EKF. Finally, simulations and real world experiments validate not only the consistency and accuracy of the proposed algorithm, but also the practicability of the proposed RI-EKF for object based SLAM problem. The MATLAB code of the algorithm is made publicly available\footnote{https://github.com/YangSONG-SLAM/RIEKF$\_$objectSLAM}. 

\end{abstract}


\section{INTRODUCTION}


During the past decade, visual sensors are very popular due to the properties of low cost and rich information, and various kinds of frameworks are designed for visual SLAM \cite{svo,DVO,orbslam2}. However, most of these works only utilize low-level features such as points \cite{svo}, lines \cite{svo-l} or planes \cite{planar slam} and neglect the high-level features such as objects which contain strong geometric constraints \cite{high level}.
High-level features have many advantages over low-level features, including broader perspective loop closures, longer feature tracking and considering the intrinsic constraints among low-level features \cite{high level,semantic slam}. Deep learning boosts the front-end techniques including the detection of high-level features. And the back-end estimation algorithms based on those need to be further investigated.

The early SLAM works considering the object features include   \cite{slam++,mono-obj-slam}, where their efforts are focusing on recognizing object features via traditional methods.  Recent works utilize neural networks to detect the object features and optimization methods to estimate the poses of both the objects and the robot \cite{cubeslam, fusion++,quadricslam}. However, optimization-based methods require much more computation than filter-based methods when the trajectory of robot is very long. Thus they are not suitable for deployment on lightweight platforms. Therefore, developing filter based methods are important. Nevertheless, to the best of our knowledge, works focused on the filter-based SLAM frameworks that consider the object features are still blank. One of the reasons is that the conventional filter-based SLAM (like standard EKF) usually suffers from the problem of inconsistency. And a consistent filter-based estimator requires elaborate modelling and algorithmic design.   

An inconsistent filter-based SLAM system will underestimate the uncertainty of the estimated state, which gradually leads to poor results and even makes the algorithm diverge. The discovery of inconsistency of point feature based EKF SLAM can date back to 2001 \cite{i1}. In the last decades, many works focus on analyzing the cause of inconsistency and proposing improvement algorithms to alleviate the inconsistency of the system \cite{i1,i3,i7,i8,i10}.  According to \cite{i8}, the explanation of such phenomena is that EKF linearizes the system model at the latest estimated values so that Jacobians of the process and observation functions are not estimated at the same state. Furthermore, \cite{i10} analyzes the observability of the system and argues that the ever-changing linearization points break the unobservable subspace of the system, therefore spurious information along the unobservable direction is introduced to the system, which makes the estimate inconsistent. To solve this problem, first Jacobian estimate (FEJ) \cite{i11} and observability constraint (OC) \cite{oc} methods are proposed.  

On the contrary, instead of adding artificial constraint into the estimator, algorithms designed by Lie group theory are found to have the potential to naturally handle invariance including observability constraints in point feature based SLAM algorithms \cite{a10,i14,i15}. Exploiting the properties of invariant tangent vector field, Lie group theory generates the invariant-EKF methodology 
\cite{c1,c4,c6}, which is firstly applied to EKF-SLAM in \cite{c6}. Recently, EKF designed on Lie groups has become popular in filter based SLAM  
\cite{i15,i16,i20,i21}, as many such algorithms perform well in terms of consistency and convergence. 
Based on a specific Lie group representation, \cite{i14} proposed right invariant EKF (RI-EKF) algorithm for 2D point feature based SLAM to alleviate the inconsistency and make the state estimates obtained more accurate. Furthermore, the convergence and consistency for RI-EKF point feature based SLAM in 3D environment are analyzed in \cite{a10}, showing the advantages of the invariant algorithms for SLAM problems. 

However, all filter-based SLAM methods mentioned above only consider the  point features, and how to fuse the object features (poses) consistently is still an untouched problem. In this paper, we propose a consistent EKF algorithm for object based SLAM. To be specific, the contributions of this paper are shown as follows:
\begin{itemize}
    \item An invariant EKF is designed on a new Lie group for SLAM with object features.
    \item The observability analysis of our proposed EKF SLAM with object features, showing it can naturally maintain the correct unobservable subspace.
    \item  The effectiveness of our proposed algorithm is validated  via simulations and real-data experiments.
\end{itemize}

\noindent\textbf{Notations:} In this paper, bold lower-case and upper-case letters are reserved for vectors and matrices/elements in the Lie group, respectively. The notation $\mathbb {SO}(3)$ represents 3D special rotation group, consisting of all rotation transformations in $\mathbb{R}^3$. $\mathfrak {so}(3)$ is the Lie algebra of $\mathbb {SO}(3)$, containing all $3\times 3$ skew symmetric matrices. $(\cdot)^{\land}$ represents the skew symmetric operator that transforms a 3-dimensional vector into a skew symmetric matrix. $\exp^G$ represents the exponential map on a Lie group $G$. $\log^G$ is the inverse of exponential map on a Lie group $G$. $N(\textbf{0},\textbf{P})$ represents a zero mean Gaussian distribution with covariance $\textbf{P}$.

\section{Object based SLAM Problem}
An object feature considered in this work is represented as a 3D pose of the object in the environment. A robot moves in an unknown 3D environment and observes some object features. The object based SLAM focuses on estimating the current robot pose and the poses of all the object features using the process model and observation model.


\subsubsection{State Space}
An object feature is defined as
\begin{equation}
	(\textbf{R}^f,\textbf{p}^f),
\end{equation}
where $\textbf{R}^f \in \mathbb {SO}(3)$ and $\textbf{p}^f \in \mathbb{R}^3$ are the rotation and position of feature, respectively. 


The set consists of all states combining with the robot pose and $K$ observed features is denoted as $\mathcal G_K$, where 
\begin{equation}
	\begin{aligned}
		\mathcal G_K=\left\{(\textbf{R}^r,\textbf{R}^{f_1},\cdots,\textbf{R}^{f_K},\textbf{p}^r,\textbf{p}^{f_1},\cdots,\textbf{p}^{f_K})\right.\\|\textbf{R}^r, \textbf{R}^{f_j}\in \mathbb{SO}(3),\ \textbf{p}^r, \textbf{p}^{f_j}\in \mathbb{R}^3 \},
	\end{aligned}
\end{equation}
$\textbf{R}^r,\textbf{R}^{f_j} \in \mathbb{SO}(3)$ represent the rotations of robot and the $j$-th feature respectively, and $\textbf{p}^r,\textbf{p}^{f_j} \in \mathbb{R}^3$ represent the position of robot and the $j$-th feature respectively, all described in the global coordinate system\footnote{Sometimes we use the notation $\mathcal G$ instead of $\mathcal G_K$ for brevity. Also, in the remaining of this paper, without losing generality, we assume that there is only one object feature, i.e. $K=1$, to simplify the equations.}.

\subsubsection{Process Model and Observation Model}

Since $ \mathbb{R}^3 \cong \mathfrak {so}(3)$, for simplification, we can define the exponential map of $\mathbb {SO}(3)$ as follows: For $\bm{\xi} \in \mathbb{R}^3$,
\begin{equation}
	\begin{array}[c]{cccc}
		\exp^{\mathbb {SO}(3)}: &\mathbb{R}^3& \rightarrow &\mathbb {SO}(3)\\
		&\bm{\xi}&\rightarrow&\sum_{k=0}^{\infty}\frac{(\bm{\xi}^{\land})^k}{k!}.
	\end{array}
\end{equation}

 The following first-order integration scheme of discrete noisy process model is widely used \cite{i11,i14}: 
\begin{equation}
	\begin{array}{lll}
		\textbf{X}_{n+1}=f(\textbf{X}_n,\textbf{U}_n,\textbf{w}_n)\\
		=(\textbf{R}^r_{n}\exp^{\mathbb{SO}(3)}(\textbf{w}_n^R)\textbf{R}^u_n,\textbf{R}^{f}_{n},\textbf{p}^r_{n}+\textbf{R}^r_{n}(\textbf{p}^u+\textbf{w}_n^p),\textbf{p}^f_{n}),
	\end{array}
	\label{ProcessModel}
\end{equation}
where $\textbf{X}_{i}=(\textbf{R}^r_{i},\textbf{R}_{i}^{f},\textbf{p}^r_{i},\textbf{p}^f_{i})$ is the state at time step $i,\ i=n,n+1$, $\textbf{U}_n=(\textbf{R}^u_n,\textbf{p}^u_n)$ is the odometry,  $\textbf{w}_n=((\textbf{w}^R_n)^T,(\textbf{w}^p_n)^T)^T(\in \mathbb{R}^6)\sim  N(\textbf{0},\mathbf{\Sigma}_n)$ is the odometry noise.



After the object detection and matching from the SLAM front-end, the observation can be regarded as relative poses of object features in the current robot frame. Then the observation model for the object feature, $(\textbf{R}^f,\textbf{p}^f)$, in the $n+1$-th robot frame, $(\textbf{R}^r_{n+1},\textbf{p}^r_{n+1})$, is constructed as
\begin{equation}
		\textbf{Z}=h(\textbf{X}_{n+1},\textbf{v}_{n+1})=(\textbf{R}^z,\textbf{p}^z),
		\label{ObsModel}
\end{equation}
where 
$$\begin{array}{rl}
	\textbf{R}^z=&\exp^{\mathbb{SO}(3)}(\textbf{v}^R_{n+1})(\textbf{R}^r_{n+1})^T\textbf{R}_{n+1}^f,\\
	\textbf{p}^z=&(\textbf{R}^r_{n+1})^T(\textbf{p}^f_{n+1}-\textbf{p}^r_{n+1})+\textbf{v}^p_{n+1},
\end{array}$$ and
$\textbf{v}_{n+1}=((\textbf{v}^R_{n+1})^T, (\textbf{v}^p_{n+1})^T)^T(\in \mathbb{R}^6) \sim N(\textbf{0},\bm{\Omega_{n+1}})$ is the observation noise.

\section{RI-EKF for Object based SLAM}
\subsection{RI-EKF framework for object based SLAM}
\subsubsection{A novel Lie group structure on state space}
For all $(\textbf{R}_i,\textbf{R}^{f}_i,\textbf{p}^r_i,\textbf{p}^{f}_i) \in \mathcal G$, $i=1,2$, an operator $\oplus$ is defined by
\begin{equation}
	\begin{aligned}
		(\textbf{R}^r_1,\textbf{R}^{f}_1,\textbf{p}^r_1,\textbf{p}^{f}_1) \oplus (\textbf{R}^r_2,\textbf{R}^{f}_2,\textbf{p}^r_2,\textbf{p}^{f}_2)=\\(\textbf{R}^r_1\textbf{R}^r_2,\textbf{R}^{f}_1\textbf{R}^{f}_2,\textbf{R}^r_1\textbf{p}^r_2+\textbf{p}^r_1,\textbf{R}^r_1\textbf{p}^{f}_2+\textbf{p}^{f}_1).
	\end{aligned}
\label{Lie_action}
\end{equation}
Then we can check that equiped with $\oplus$ defined in (\ref{Lie_action}), the state space $\mathcal G$ becomes a Lie group and is isomotric to $\mathbb{SE}_{K+1}(3) \times (\mathbb{SO}(3))^K$, where $K$ (set as 1 for simplicity) is the number of observed  features. The Lie group $\mathbb{SE}_{K+1}(3)$, defined in \cite{a10,i14}, plays a significant role in RI-EKF for point feature based SLAM. The notation $\ominus$, the minus of $\oplus$, is defined by $\textbf{X}_a\ominus \textbf{X}_b=\textbf{X}_a\oplus \textbf{X}_b^{-1}$ and $\textbf{X}_b\oplus\textbf{X}_b^{-1}=\textbf{X}_b^{-1}\oplus\textbf{X}_b=(\textbf{I},\textbf{I},\textbf{0},\textbf{0})$. 
Denote $\mathfrak{g}$ as the Lie algebra of $\mathcal G$. And we have  $\mathfrak{g}\cong se_{K+1}(3)\times (so(3))^K \cong \mathbb{R}^{6+6K}$. 
Therefore,  the form of an element $\bm{\xi}$ in Lie algebra $\mathfrak{g}$ can be constructed as
\begin{equation}\label{eq:nonlinear_error}
	\bm{\xi}^T=((\bm{\xi}^{R^r})^T,(\bm{\xi}^{R^f})^T,(\bm{\xi}^{p^r})^T,(\bm{\xi}^{p^f})^T),
\end{equation}
where $\bm{\xi}^{R^r},\bm{\xi}^{R^f},\bm{\xi}^{p^r},\bm{\xi}^{p^f} \in \mathbb{R}^3$.
The exponential map $\exp^{\mathcal G}$ on this Lie group  can  be defined by
\begin{equation}
    \begin{aligned}
		\exp^{\mathcal G}(\bm{\xi})=&(\exp^{\mathbb {SO}(3)}(\bm{\xi}^{R^r}),\exp^{\mathbb {SO}(3)}(\bm{\xi}^{R^f})\\& J_l(\bm{\xi}^{R^r})\bm{\xi}^{p^r},J_l(\bm{\xi}^{R^r})\bm{\xi}^f),
	\end{aligned}
\end{equation}
where \begin{equation}
	J_l(\bm{\xi}^{R^r})=\sum_{k=0}^{\infty}\frac{((\bm{\xi}^{R^r})^{\land})^k}{(k+1)!}.
\end{equation}
Then an error state $\bm{\xi}$ for an estimated state $\hat{\textbf{X}}$ satisfies
\begin{equation}
    \textbf{X}=\exp^{\mathcal G}(\bm{\xi})\oplus\hat{\textbf{X}},
\end{equation}
where $\textbf{X}$ represents the true state.

\subsubsection{Propagation}
Based on the Lie group structure introduced above, the process model (\ref{ProcessModel}) becomes 
\begin{equation}
	\begin{aligned}
		\textbf{X}_{n+1}=\textbf{X}_n\oplus (\exp^{\mathbb{SO}(3)}(\textbf{w}_n^R)\textbf{R}^u_n,\textbf{I}_3,\textbf{p}^u_n+\textbf{w}^p_n,\textbf{0}_{3\times1}).
	\end{aligned}	
\end{equation}
The predicted state, $\textbf{X}_{n+1|n}$, by propagation is computed by
\begin{equation}
	\textbf{X}_{n+1|n}=(\textbf{R}^r_{n|n}\textbf{R}^u_n,\textbf{R}^{f}_{n|n},\textbf{R}^r_{n|n}\textbf{p}^u_n+\textbf{p}^r_{n|n},\textbf{p}^f_{n|n}).
\end{equation}
where $\textbf{X}_{n|n}=(\textbf{R}^r_{n|n},\textbf{R}^{f}_{n|n},\textbf{p}^r_{n|n},\textbf{p}^f_{n|n})$ is the updated state at time $n$. 
And the estimated error $\bm{\xi}_{n+1|n}$ by propagation is 
\begin{equation}
	\begin{array}{lll}
		\bm{\xi}_{n+1|n}&\dot{=}\log(\textbf{X}_{n+1}\ominus \textbf{X}_{n+1|n})\\
		&\approx \textbf{F}_n\bm{\xi}_{n|n}+\textbf{G}_n{\textbf{w}}_n,
	\end{array}
\end{equation}
where $\bm{\xi}_{n|n} \sim N(\textbf{0},\bm{P}_n)$ is the estimation error  for $\textbf{X}_{n|n}$,  ${\textbf{w}}_n=((\textbf{w}^R_n)^T,(\textbf{w}^p_n)^T)^T$ is the odometry noise, the coefficient matrices $\textbf{F}_n$ and $\textbf{G}_n$ in RI-EKF are
\begin{equation}
	\begin{array}{lll}
		\textbf{F}_n=\textbf{I}_{6+6K},\\
		\textbf{G}_n=\left[
		\begin{array}{cccc}
			\textbf{R}^r_{n|n}& \textbf{0}_{3\times 3}\\ \textbf{0}_{3\times 3}& \textbf{0}_{3\times 3}\\
			(\textbf{p}^r_{n|n}+\textbf{R}^r_{n|n}\textbf{p}^u_n)^{\land}\textbf{R}^r_{n|n}&\textbf{R}^r_{n|n}\\
			(\textbf{p}^f_{n|n})^{\land}\textbf{R}^r_{n|n}& \textbf{0}_{3\times 3}
		\end{array}
		\right].\\
	\end{array}
	\label{FG}
\end{equation}
Then, the covariance matrix of the state $\textbf{X}_{n+1|n}$ by propagation is 
\begin{equation}
	\textbf{P}_{n+1|n}={\textbf{{F}}}_{n} {\textbf{P}}_n  {\textbf{{F}}}^{T}_{n} + {\textbf{{G}}}_n{\bm{\Sigma}}_{n} {\textbf{{G}}}^{T}_{n}.
\end{equation}

\subsubsection{Update}
Suppose $\textbf{Z}$ is an observation in (\ref{ObsModel}). By introducing a new minus operator $\boxminus$ for the observation, innovation $y$ is defined as 
\begin{equation}
	\begin{array}{lll}
		&\textbf{y}=\left[
		\begin{array}{ccc}
			\textbf{y}^R\\
			\textbf{y}^p\\
		\end{array}
		\right]=\textbf{Z}\boxminus [(\hat{\textbf{R}}^r)^T\hat{\textbf{R}}^f,(\hat{\textbf{R}}^r)^T(\hat{\textbf{p}}^f-\hat{\textbf{p}}^r)]\\
		&\doteq\left[
		\begin{array}{ccc}
			\log^{\mathbb{SO}(3)}(\exp^{\mathbb{SO}(3)}((\textbf{v}^R)^{\land})(\textbf{R}^r)^T\textbf{R}^f(\hat{\textbf{R}}^f)^T\hat{\textbf{R}}^r)\\ (\textbf{R}^r)^T(\textbf{p}^f-\textbf{p}^r)+\textbf{v}^p-[(\hat{\textbf{R}}^r)^T(\hat{\textbf{p}}^f-\hat{\textbf{p}}^r)]\\
		\end{array}
		\right],
	\end{array}
	\label{Y}
\end{equation}
where $\textbf{y}^R, \textbf{y}^p\in \mathbb{R}^3$.
The linearization of $\textbf{y}^R$ is obtained by
\begin{equation}
	\begin{array}{lll}
		\textbf{I}_3+(\textbf{y}^R)^{\land}&\approx \exp^{\mathbb{SO}(3)}((\textbf{y}^R)^{\land})\\
		&=\exp^{\mathbb{SO}(3)}((\textbf{v}^R)^{\land})(\textbf{R}^r)^T\textbf{R}^f(\hat{\textbf{R}}^f)^T\hat{\textbf{R}}^r\\
		&\approx \textbf{I}_3-((\hat{\textbf{R}}^r)^T\bm{\xi}^{R^r})^{\land}\\&\ \ \ +((\hat{\textbf{R}}^r)^T\bm{\xi}^{R^f})^{\land}+(\textbf{v}^R)^{\land}.
	\end{array}
\end{equation}
Then, by omitting the second-order small quantities, we have
\begin{equation}
	\begin{array}{lll}
		\textbf{y}^R=-(\hat{\textbf{R}}^r)^T\bm{\xi}^{R^r}+(\hat{\textbf{R}}^r)^T\bm{\xi}^{R^f}+\textbf{v}^R.
	\end{array}
\end{equation}
The linearization of $\textbf{y}^p$ can be derived directly from point feature RI-EKF SLAM in \cite{a10,i14}. For the innovation at time step $(n+1)$, we have
\begin{equation}
	\begin{array}{lll}
		\textbf{y}_{n+1}=\textbf{H}_{n+1}\bm{\xi}_{n+1|n}+\textbf{v}_{n+1},\\
	\end{array}
	\label{H}
\end{equation}
where 
\begin{equation}
	\textbf{H}_{n+1}=\left[
	\begin{array}{cccc}
		\textbf{H}^{R,R^r}_{n+1}& \textbf{H}^{R,R^f}_{n+1}& \textbf{0}_{3\times 3}& \textbf{0}_{3\times 3}\\
		\textbf{0}_{3\times 3}& \textbf{0}_{3\times 3}& \textbf{H}^{p,p^r}_{n+1}& \textbf{H}^{p,p^f}_{n+1}\\
	\end{array}
	\right],
\end{equation}
$$\begin{aligned}
	&\textbf{H}^{R,R^r}_{n+1}=\textbf{H}^{p,p^r}_{n+1}=-({\textbf{R}}_{n+1|n}^r)^T,\\
	&\textbf{H}^{R,R^f}_{n+1}=\textbf{H}^{p,p^f}_{n+1}=({\textbf{R}}_{n+1|n}^r)^T,
\end{aligned}$$ and
$\textbf{v}_{n+1}\sim N(\textbf{0},\bm{\Omega}_{n+1})$ is the observation noise.
Then the state is updated by
\begin{equation}
	\textbf{X}_{n+1|n+1}=\exp^{\mathcal{G}}(\bm{\xi}_{n+1|n+1})\oplus \textbf{X}_{n+1|n},
\end{equation}
where $\bm{\xi}_{n+1|n+1}={\textbf{K}}_{n+1}{\textbf{y}}_{n+1}$ is the update state error vector, and
$${\textbf{K}}_{n+1}={\textbf{P}}_{n+1|n} {{\textbf{H}}}_{n+1}^{T} ({{\textbf{H}}}_{n+1} {\textbf{P}}_{n+1|n}{{\textbf{H}}}_{n+1}^{T}+{\bm{\Omega}}_{n+1})^{-1}.$$
Its covariance is updated as
\begin{equation}
	{{\textbf{P}}}_{n+1} = ({\textbf{I}}-{{\textbf{K}}}_{n+1}{{\textbf{H}}}_{n+1}){\textbf{P}}_{n+1|n}.
\end{equation}

The whole process of RI-EKF integrating object features can be summarized  in \textbf{Algorithm \ref{Alg:RI-EKF}}.

\begin{algorithm}[H]
	$\textbf{F}_n$, $\textbf{G}_n$, $\textbf{H}_{n+1}$ are given in (\ref{FG}) and (\ref{H}).\\
	\hspace*{0.02in} {\bf Input:} $\textbf{X}_{n|n}$, ${\textbf{P}}_n$, ${\textbf{U}}_n$, ${\textbf{Z}}_{n+1}$\\
	\hspace*{0.02in} {\bf Output:} $\textbf{X}_{n+1|n+1}$, ${\textbf{P}}_{n+1}$\\
	\hspace*{0.02in} {\bf Propagation:} \\
	$ \textbf{X}_{n+1|n}  \leftarrow   \textbf{X}_{n|n}\oplus \textbf{U}_n$ \\
	${\textbf{P}}_{n+1|n} \leftarrow {{\textbf{F}}}_{n} {\textbf{P}}_n  {{\textbf{F}}}^{T}_{n} + {{\textbf{G}}}_n \bm{\Sigma}_{n} {{\textbf{G}}}^{T}_{n} $\\
	\hspace*{0.02in} {\bf Update:} \\	
	${{\textbf{K}}}_{n+1} \leftarrow {\textbf{P}}_{n+1|n} {{\textbf{H}}}_{n+1}^{T} ({{\textbf{H}}}_{n+1} {\textbf{P}}_{n+1|n}{\textbf{{H}}}_{n+1}^{T}+\bm{\Omega}_{n+1}  )^{-1} $\\ 
	${\textbf{y}}_{n+1} \leftarrow {\textbf{Z}}_{n+1}\boxminus h_{n+1}(\textbf{X}_{n+1|n}, {\textbf{0}} )$ \\
	$\textbf{X}_{n+1|n+1} \leftarrow  \exp^{\mathcal{G}}({{\textbf{K}}}_{n+1}{\textbf{y}}_{n+1})\oplus \textbf{X}_{n+1|n}$\\ 
	${{\textbf{P}}}_{n+1} \leftarrow ({\textbf{I}}-{{\textbf{K}}}_{n+1}{{\textbf{H}}}_{n+1}){\textbf{P}}_{n+1|n}$\\
	\caption{RI-EKF for Object based SLAM}
	\label{Alg:RI-EKF}
\end{algorithm}

\subsection{New Feature Initialization}\label{sec:initialization}
Besides propagation and updating, another indispensable procedure is object feature initialization. This subsection will propose the method to augment the estimated state $\textbf{X} \in \mathcal{G}$ and the covariance matrix $\textbf{P}$ when the robot observes a new object feature with the observation $\textbf{Z}=(\textbf{R}^z,\textbf{p}^z) \in \mathbb{SO}(3)\times \mathbb{R}^3$.

Suppose the state error $\bm{\xi} \in \mathfrak{g}\cong \mathbb{R}^{6K+6}$ is of the form
\begin{equation}
	\begin{array}{lll}
		\bm{\xi}&=\left[
		\begin{array}{llll}
			\bm{\xi}^{R}\\
			\bm{\xi}^{p}\end{array} \right]
		=\log^{\mathcal{G}}(\textbf{X}\ominus \hat{\textbf{X}}),
	\end{array}
\label{xi_form}
\end{equation}
where $$\bm{\xi}^{R}=((\bm{\xi}^{R^r})^T,(\bm{\xi}^{R^{f}})^T)^T$$ is the rotation part, and $$\bm{\xi}^{p}=((\bm{\xi}^{p^r})^T,(\bm{\xi}^{p^{f}})^T)^T$$
is the position part.

Suppose the error $\bm{\xi}\sim N(\textbf{0},\textbf{P})$. Then by the mathematical derivation in Appendix A, 
the expectation of the new object feature, $(\hat{\textbf{R}}^{f_{\text{new}}},\hat{\textbf{p}}^{f_{\text{new}}})$, related to the observation $\textbf{Z}$ is given by
\begin{equation}
	\begin{array}{rll}
		\hat{\textbf{R}}^{f_{\text{new}}}&=\hat{\textbf{R}}^r\textbf{R}^z,\\
		\hat{\textbf{p}}^{f_{\text{new}}}&=\hat{\textbf{p}}^r+\hat{\textbf{R}}^r\textbf{p}^z.
	\end{array}\label{expectations}
\end{equation}
And the augmented covariance $\textbf{P}_{\text{aug}}$ is computed as
\begin{equation}
	\textbf{P}_{\text{aug}}=\left[
	\begin{array}{cccccc}
		\textbf{P}^{R,R}& \textbf{P}^{R,R}\textbf{M}_1^{T}& \textbf{P}^{R,p}& \textbf{P}^{R,p}\textbf{M}_2^{T}\\
		\textbf{M}_1P^{R,R}& \textbf{P}_f^{R,R}& \textbf{M}_1\textbf{P}^{R,p}& \textbf{P}_f^{R,p}\\
		\textbf{P}^{p,R}& \textbf{P}^{p,R}\textbf{M}_1^{T}& \textbf{P}^{p,p}& \textbf{P}^{p,p}\textbf{M}_2^{T}\\
		\textbf{M}_2\textbf{P}^{p,R}& (\textbf{P}_f^{R,p})^T& \textbf{M}_2\textbf{P}^{p,p}& \textbf{P}_f^{p,p}\\
	\end{array}
	\right],
	\label{Pnew}
\end{equation}
where
\begin{equation}
	\begin{array}{rll}
		\textbf{M}_1&=[\textbf{I}_3 \ \textbf{0}_{3,3K}],\\
		\textbf{M}_2&=[\textbf{I}_3 \ \textbf{0}_{3,3K}],\\
		\textbf{P}_f^{R,R}&=\textbf{M}_1\textbf{P}^{R,R}\textbf{M}_1^{T}+\hat{\textbf{R}}^r\bm{\Omega}^{R,R}(\hat{\textbf{R}}^r)^T,\\
		\textbf{P}_f^{R,p}&=\textbf{M}_1\textbf{P}^{R,p}\textbf{M}_2^{T}+\hat{\textbf{R}}^r\bm{\Omega}^{R,p}(\hat{\textbf{R}}^r)^T,\\
		\textbf{P}_f^{p,p}&=\textbf{M}_2\textbf{P}^{p,p}\textbf{M}_2^{T}+\hat{\textbf{R}}^r\bm{\Omega}^{p,p}(\hat{\textbf{R}}^r)^T.
	\end{array}
\end{equation}
The whole process to augment the state is summarized in \textbf{Algorithm \ref{Alg:InitialMethod}}.
\begin{algorithm}[H]
	\hspace*{0.02in} {\bf Input:} \\
	The state and its covariance before augmentation: \\
	$\hat{\textbf{X}}=\left[
	\begin{array}{cccccccc}
		\hat{\textbf{R}}^r& \hat{\textbf{R}}^{f}&\hat{\textbf{p}}^r& \hat{\textbf{p}}^f\\
	\end{array}
	\right]$\\
	${\textbf{P}}=\left[
	\begin{array}{ccc}
		\textbf{P}^{R,R}& \textbf{P}^{R,p}\\
		\textbf{P}^{p,R}& \textbf{P}^{p,p}
	\end{array}
	\right]$\\
	The observation of new feature: $\textbf{Z}=(\textbf{R}^z,\textbf{p}^z)\in \mathbb{SO}(3)\times \mathbb{R}^3$\\
	The covariance of observation noise: $\bm{\Omega}=\left[
	\begin{array}{ccc}
		\bm{\Omega}^{R,R}& \bm{\Omega}^{R,p}\\
		\bm{\Omega}^{p,R}& \bm{\Omega}^{p,p}
	\end{array}
	\right]$\\ \\
	\hspace*{0.02in} {\bf Output:} 
	The augmented state and its covariance:\\
	$\hat{\textbf{X}}_{\text{aug}}=\left[
	\begin{array}{cccccccccc}
		\hat{\textbf{R}}^r& \hat{\textbf{R}}^{f}& \hat{\textbf{R}}^r\textbf{R}^z& \hat{\textbf{p}}^r& \hat{\textbf{p}}^f& \hat{\textbf{p}}^r+\hat{\textbf{R}}^r\textbf{p}^z\\
	\end{array}
	\right]$\\ \\
	${\textbf{P}}_{\text{aug}}
	$ is obtained by (\ref{Pnew})
	\caption{New Feature Initialization}
	\label{Alg:InitialMethod}
\end{algorithm}

\section{Observability Analysis}\label{Ob_anal}
Based on previous research about inconsistency \cite{i7,i8,i10,i11,oc,i14}, the EKF SLAM inconsistency is mainly caused by the violation of the observability constraints. A consistent EKF SLAM estimator should satisfy the following observability constraints: the unobservable subspace for the system model of the estimator is the same as that of the real system (the ideal case where the Jacobians are evaluated at the true state). In this section, we prove that our RI-EKF for object based SLAM can automatically maintain the observability constraints. On the contrary, standard EKF for object based SLAM briefly introduced is unable to maintain the observability constraints. These  explain the better performance by our algorithms in terms of consistency in the following experiments. 
\begin{myDef} 
	The unobservable subspace $\mathcal {\hat{N}}$ 
	based on the state estimates
	is the null space of the corresponding observability matrix   $\mathcal{\hat{\mathbf{O}}}$, 
	where \begin{equation}
		\mathcal{\hat{\mathbf{O}}}=\left[
		\begin{array}{cccccc}
			\hat{\textbf{H}}_0\\
			\hat{\textbf{H}}_1\hat{\textbf{F}}_{0,0}\\
			\vdots\\
			\hat{\textbf{H}}_{n+1}\hat{\textbf{F}}_{n,0}
		\end{array}
		\right],
	\end{equation}
	$\hat{\textbf{H}}_{i}$ is the Jacobian matrix for the $i$-th step observation model evaluated at the state estimate $\hat{\textbf{X}}_{i}$, and $\hat{\textbf{F}}_{i,0}=\hat{\textbf{F}}_i\hat{\textbf{F}}_{i-1}\cdots \tilde{\textbf{F}}_0$, $\hat{\textbf{F}}_j$  is the Jacobian matrix for the $j$-th step propagation model of the estimator evaluated at the state $\hat{\textbf{X}}_j$, $j=0,\cdots,i$. 
	If the models are linearized at the  ground truth,  
	The unobservable subspace based on the true states is denoted by $\mathcal {\breve{N}}$, and the corresponding observability matrix is denoted by $\mathcal{\breve{\mathbf{O}}}$.
\end{myDef}

\subsection{Observability Analysis for RI-EKF}
\begin{myTheo}
	For RI-EKF, the unobservable subspace $\mathcal{\hat{N}}^{RI}$   is the same as $\mathcal{\breve{N}}^{RI}$,  where
	\begin{equation}
		\mathcal{\hat{N}}^{RI}=\mathcal{\breve{N}}^{RI}=\mathop{\text{span}} _{col.}\left[
		\begin{array}{cccccc}
			\textbf{I}_{3}& \textbf{0}_{3,3}\\
			\textbf{I}_{3}& \textbf{0}_{3,3}\\
			\textbf{0}_{3,3}& \textbf{I}_{3}\\
			\textbf{0}_{3,3}& \textbf{I}_{3}
		\end{array}
		\right],
	\end{equation}
	and $\dim(\mathcal{\hat{N}}^{RI})=\dim(\mathcal{\breve{N}}^{RI})=6$.
	\label{RI_OC}
\end{myTheo}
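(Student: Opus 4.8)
The plan is to write down the observability matrix $\mathcal{\hat{\mathbf{O}}}^{RI}$ explicitly and simply read off its null space. The key simplification is supplied by the propagation Jacobian in~(\ref{FG}): since $\hat{\textbf{F}}_j=\textbf{I}_{6+6K}$ for every step $j$ and at every linearization point, we get $\hat{\textbf{F}}_{i,0}=\textbf{I}$ for all $i$, so $\mathcal{\hat{\mathbf{O}}}^{RI}$ is just the vertical stack of the observation Jacobians $\hat{\textbf{H}}_0,\hat{\textbf{H}}_1,\dots,\hat{\textbf{H}}_{n+1}$, each in the block form given after~(\ref{H}). Note that in that form the only linearization-dependent quantity is the robot rotation $\hat{\textbf{R}}^r_{i}$ (the four nonzero blocks are $\pm(\hat{\textbf{R}}^r_i)^T$).

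Next I would characterize $\ker\hat{\textbf{H}}_i$ for a single step. Writing a test vector blockwise as $\textbf{v}=((\textbf{v}^{R^r})^T,(\textbf{v}^{R^f})^T,(\textbf{v}^{p^r})^T,(\textbf{v}^{p^f})^T)^T$ with each block in $\mathbb{R}^3$, the equation $\hat{\textbf{H}}_i\textbf{v}=\textbf{0}$ becomes $(\hat{\textbf{R}}^r_i)^T(\textbf{v}^{R^f}-\textbf{v}^{R^r})=\textbf{0}$ and $(\hat{\textbf{R}}^r_i)^T(\textbf{v}^{p^f}-\textbf{v}^{p^r})=\textbf{0}$; since $(\hat{\textbf{R}}^r_i)^T$ is invertible this is equivalent to $\textbf{v}^{R^f}=\textbf{v}^{R^r}$ and $\textbf{v}^{p^f}=\textbf{v}^{p^r}$, with no dependence on $\hat{\textbf{R}}^r_i$ at all. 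Hence every $\hat{\textbf{H}}_i$ has the same kernel, namely the column span of the matrix displayed in the theorem, and taking the null space of the stacked matrix (i.e.\ intersecting these kernels) returns the same subspace. A rank count finishes the dimension claim: each $\hat{\textbf{H}}_i$ is $6\times 12$ with its two block-rows supported on disjoint column sets and each of rank $3$, so $\operatorname{rank}\hat{\textbf{H}}_i=6$ and the kernel has dimension exactly $12-6=6$; adding further rows cannot enlarge it.

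Finally, the equality $\mathcal{\hat{N}}^{RI}=\mathcal{\breve{N}}^{RI}$ is immediate, because the kernel computation above never used the value of the rotation, so it produces the identical subspace whether the Jacobians are evaluated at the estimates $\hat{\textbf{X}}_i$ or at the ground truth — this linearization-point invariance is exactly the point of the theorem. One should also confirm the reverse check, that $\mathcal{\hat{\mathbf{O}}}^{RI}$ annihilates the stated basis, but that is precisely one of the inclusions already established. I do not expect a substantive obstacle: the only real content is the two structural facts $\hat{\textbf{F}}_j=\textbf{I}$ and the rotation-only dependence of $\hat{\textbf{H}}_i$, both read directly off~(\ref{FG}) and~(\ref{H}); the remaining work is careful bookkeeping of the block structure and the rank count to certify $\dim=6$ rather than something larger.
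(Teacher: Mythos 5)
Your proposal is correct and follows essentially the same route as the paper's own proof: exploit $\hat{\textbf{F}}_j=\textbf{I}_{6+6K}$ so the observability matrix is the stack of the $\hat{\textbf{H}}_i$, use the block structure with entries $\pm(\hat{\textbf{R}}^r_i)^T$, and observe that the resulting null space is the stated six-dimensional subspace independently of the rotation values, hence identical whether the Jacobians are evaluated at the estimates or at the ground truth. Your explicit kernel characterization ($\textbf{v}^{R^f}=\textbf{v}^{R^r}$, $\textbf{v}^{p^f}=\textbf{v}^{p^r}$) and rank count just make explicit what the paper asserts directly.
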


\begin{proof}
    See Appendix B. 
\end{proof} 

Therefore, Theorem \ref{RI_OC} shows that RI-EKF automatically maintains the correct unobservable subspace, which will significantly improve the consistency.

\subsection{Standard EKF for Object based SLAM}
The standard EKF (Std-EKF) for object based SLAM is $\mathbb{SO}(3)$-EKF, whose state space is isomorphic to $(\mathbb{SO}(3))^{K+1}\times (\mathbb{R}^3)^{K+1}$. Suppose there is only one feature in the state vector, an error state $\bm{\eta}\in (\mathfrak{so}(3))^{K+1}\times(\mathbb{R}^3)^{K+1}$, $K=1$, in standard EKF is obtained by
\begin{equation}
    \begin{array}{ll}
    \bm{\eta}&=(\bm{\eta}^{R^r},\bm{\eta}^{R^f},\bm{\eta}^{p^r},\bm{\eta}^{p^f})\\
    &=(log^{\mathbb{SO}(3)}(\textbf{R}^r(\hat{\textbf{R}}^r)^T),log^{\mathbb{SO}(3)}(\textbf{R}^f(\hat{\textbf{R}}^f)^T),\\
    &\ \ \ \ \ \textbf{p}^r-\hat{\textbf{p}}^r,\textbf{p}^f-\hat{\textbf{p}}^f),
    \end{array}
\end{equation}
where $(\textbf{R}^r,\textbf{p}^r)$ and $(\hat{\textbf{R}}^r,\hat{\textbf{p}}^r)$ are the true and the estimated robot poses, $(\textbf{R}^f,\textbf{p}^f)$ and $(\hat{\textbf{R}}^f,\hat{\textbf{p}}^f)$ are the true and the estimated object features, respectively.
Based on this linearization method above, the Jacobians of considered system  are 
\begin{equation}
	\begin{array}{lll}
		\textbf{F}^{Std}_n=\left[
		\begin{array}{cccc}
			\textbf{I}_3& \textbf{0}_{3\times 3}& \textbf{0}_{3\times 3}& \textbf{0}_{3\times 3}\\
			\textbf{0}_{3\times 3}& \textbf{I}_3& \textbf{0}_{3\times 3}&  \textbf{0}_{3\times 3}\\
			-(\textbf{R}^r_{n|n}\textbf{p}^{u})^{\land}& \textbf{0}_{3\times 3}& \textbf{I}_3& \textbf{0}_{3\times 3}\\
			\textbf{0}_{3\times 3}& \textbf{0}_{3\times 3}& \textbf{0}_{3\times 3}& \textbf{I}_3
		\end{array}
		\right],\\
		\textbf{G}^{Std}_n=\left[
		\begin{array}{cccc}
			\textbf{R}^r_{n|n}& \textbf{0}_{3\times 3}& \textbf{0}_{3\times 3}& \textbf{0}_{3\times 3}\\
			\textbf{0}_{3\times 3}& \textbf{0}_{3\times 3}& \textbf{0}_{3\times 3}&  \textbf{0}_{3\times 3}\\
			\textbf{0}_{3\times 3}& \textbf{0}_{3\times 3}& \textbf{R}^r_{n|n}& \textbf{0}_{3\times 3}\\
			\textbf{0}_{3\times 3}& \textbf{0}_{3\times 3}& \textbf{0}_{3\times 3}& \textbf{0}_{3\times 3}
		\end{array}
		\right],\\
		\textbf{H}^{Std}_{n+1}=\left[
		\begin{array}{cccc}
			\textbf{H}^{R,R^r}_{n+1}& \textbf{H}^{R,R^f}_{n+1}& \textbf{0}_{3\times 3}& \textbf{0}_{3\times 3}\\
			\textbf{H}^{p,R^r}_{n+1}& \textbf{0}_{3\times 3}& \textbf{H}^{p,p^r}_{n+1}& \textbf{H}^{p,p^f}_{n+1}\\
		\end{array}
		\right],\\
	\end{array}
	\label{FGH_std}
\end{equation}
where
$$\begin{aligned}
	&\textbf{H}^{R,R^r}_{n+1}=\textbf{H}^{p,p^r}_{n+1}=-({\textbf{R}}_{n+1|n}^r)^T,\\
	&\textbf{H}^{R,R^f}_{n+1}=\textbf{H}^{p,p^f}_{n+1}=({\textbf{R}}_{n+1|n}^r)^T,\\
	&\textbf{H}^{p,R^r}_{n+1}=(\textbf{R}^r_{n+1|n})^T(\textbf{p}^f_{n+1|n}-\textbf{p}^r_{n+1|n})^{\land},
\end{aligned}$$
$\textbf{F}_n$ and $\textbf{G}_n$ are  the Jacobians of process model evaluated by the state $\textbf{X}_{n|n}$ and the odometry $(\textbf{R}_n^u,\textbf{p}_n^u)$, respectively, and $\textbf{H}^{Std}_{n+1}$ evaluated at $\textbf{X}_{n+1|n}$ is the Jacobian of observation $\textbf{y}$ defined in (\ref{Y}).

\subsection{Observability Analysis for Standard EKF}
\begin{myTheo}
	For Std-EKF, the unobservable subspace $\mathcal{\hat{N}}^{Std}$,   is a proper subspace of $\mathcal{\breve{N}}^{Std}$, where
	\begin{equation}		\mathcal{\hat{N}}^{Std}=\mathop{\text{span}} _{col.}[\textbf{0}_{3\times 3},\textbf{0}_{3\times 3},\textbf{I}_3,\textbf{I}_3]^T,
	\end{equation}
	and 
	\begin{equation}
		\mathcal{\breve{N}}^{Std}=\mathop{\text{span}} _{col.}\left[
		\begin{array}{cccccc}
			\textbf{0}_{3,3}& \textbf{I}_{3}\\
			\textbf{0}_{3,3}& \textbf{I}_{3}\\
			\textbf{I}_{3}& (\textbf{p}^r_0)^{\land}\\
			\textbf{I}_{3}& (\textbf{p}^f)^{\land}
		\end{array}
		\right],
	\end{equation}
	$\textbf{p}^r_0$ and $\textbf{p}^f$ are respectively the true positions of initial robot and object feature.
	And the dimension of $\mathcal{\hat{N}}^{Std}$ is 3, while the dimension of $\mathcal{\breve{N}}^{Std}$ is 6.
	\label{Std_OC}
\end{myTheo}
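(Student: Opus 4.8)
The plan is to form the two observability matrices defined above from the Jacobians in (\ref{FGH_std}) and read off their kernels, treating the ideal (true--state) case and the estimator case separately; everything turns on how the $(p^r,R^r)$ block of the composed state--transition matrix behaves. In the ideal case I would first note that along the true trajectory $\breve{\textbf{p}}^r_{j+1}=\breve{\textbf{p}}^r_j+\breve{\textbf{R}}^r_j\textbf{p}^u_j$, so $\breve{\textbf{F}}^{Std}_j$ differs from the identity only in its $(p^r,R^r)$ block, which equals $-(\breve{\textbf{p}}^r_{j+1}-\breve{\textbf{p}}^r_j)^{\land}$. Products of block--triangular matrices that differ from the identity in a single off--diagonal block simply add that block, so $\breve{\textbf{F}}^{Std}_{i,0}$ has $(p^r,R^r)$ block $-(\breve{\textbf{p}}^r_{i+1}-\breve{\textbf{p}}^r_0)^{\land}$ and identity blocks elsewhere. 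Multiplying by $\breve{\textbf{H}}^{Std}_{i+1}$, whose $(p,R^r)$ block on the true trajectory carries the current position $\breve{\textbf{p}}^r_{i+1}$, that term cancels against the telescoped one, so every block row of $\mathcal{\breve{\mathbf{O}}}^{Std}$ reduces to one depending only on $(\breve{\textbf{R}}^r_{i+1})^T$ and on the time--invariant vector $\breve{\textbf{p}}^f-\breve{\textbf{p}}^r_0$ (and it coincides with $\breve{\textbf{H}}^{Std}_0$). Substituting the generators of $\mathcal{\breve{N}}^{Std}$ then annihilates every block row, so $\mathcal{\breve{N}}^{Std}\subseteq\ker\mathcal{\breve{\mathbf{O}}}^{Std}$; conversely, solving $\mathcal{\breve{\mathbf{O}}}^{Std}\textbf{n}=\textbf{0}$ directly, the rotational rows give $\textbf{n}^{R^f}=\textbf{n}^{R^r}$ and the positional rows give one further relation tying $\textbf{n}^{p^f}$, $\textbf{n}^{p^r}$ and $\textbf{n}^{R^r}$, with no additional constraint, so $\ker\mathcal{\breve{\mathbf{O}}}^{Std}$ is exactly the $6$-dimensional $\mathcal{\breve{N}}^{Std}$ (its two blocks of generators being visibly independent).

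For the estimator case the only change is that $\hat{\textbf{F}}^{Std}_{i,0}$ has $(p^r,R^r)$ block $-\sum_{j=0}^{i}(\hat{\textbf{R}}^r_{j|j}\textbf{p}^u_j)^{\land}=-\sum_{j=0}^{i}(\hat{\textbf{p}}^r_{j+1|j}-\hat{\textbf{p}}^r_{j|j})^{\land}$, and this does \emph{not} collapse to $-(\hat{\textbf{p}}^r_{i+1|i}-\hat{\textbf{p}}^r_{0|0})^{\land}$: the update step inserts a positional correction $\bm{\Delta}_j=\hat{\textbf{p}}^r_{j|j}-\hat{\textbf{p}}^r_{j|j-1}$, so the sum equals $-\big(\hat{\textbf{p}}^r_{i+1|i}-\hat{\textbf{p}}^r_{0|0}-\sum_{j\le i}\bm{\Delta}_j\big)^{\land}$. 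Carrying this through, the $(p,R^r)$ block of the $(i{+}1)$-th row of $\mathcal{\hat{\mathbf{O}}}^{Std}$ becomes $(\hat{\textbf{R}}^r_{i+1|i})^T\big(\hat{\textbf{p}}^f_{i|i}-\hat{\textbf{p}}^r_{0|0}-\sum_{j\le i}\bm{\Delta}_j\big)^{\land}$, which genuinely varies with $i$. Writing a candidate kernel vector as $\textbf{n}=((\textbf{n}^{R^r})^T,(\textbf{n}^{R^f})^T,(\textbf{n}^{p^r})^T,(\textbf{n}^{p^f})^T)^T$ and imposing $\mathcal{\hat{\mathbf{O}}}^{Std}\textbf{n}=\textbf{0}$: the rotational rows still force $\textbf{n}^{R^f}=\textbf{n}^{R^r}$, and the positional rows require $\big(\hat{\textbf{p}}^f_{i|i}-\hat{\textbf{p}}^r_{0|0}-\sum_{j\le i}\bm{\Delta}_j\big)^{\land}\textbf{n}^{R^r}=\textbf{n}^{p^r}-\textbf{n}^{p^f}$ for every $i$; since the vector inside the skew operator moves with $i$ along any trajectory that performs nontrivial updates, this forces $\textbf{n}^{R^r}=\textbf{0}$, hence $\textbf{n}^{R^f}=\textbf{0}$ and $\textbf{n}^{p^f}=\textbf{n}^{p^r}$. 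Therefore $\mathcal{\hat{N}}^{Std}=\mathop{\text{span}}_{col.}[\textbf{0}_{3\times3},\textbf{0}_{3\times3},\textbf{I}_3,\textbf{I}_3]^T$ has dimension $3$, and since $[\textbf{0}_{3\times3},\textbf{0}_{3\times3},\textbf{I}_3,\textbf{I}_3]^T$ is the first block column of the generator of $\mathcal{\breve{N}}^{Std}$ it lies in $\mathcal{\breve{N}}^{Std}$, whence $\mathcal{\hat{N}}^{Std}\subsetneq\mathcal{\breve{N}}^{Std}$.

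I expect the main obstacle to be the genericity step in the estimator case: showing that the moving linearization points really do destroy the rotational part of the unobservable subspace, i.e.\ that no nonzero $\textbf{n}^{R^r}$ is annihilated by all of the skew matrices $\big(\hat{\textbf{p}}^f_{i|i}-\hat{\textbf{p}}^r_{0|0}-\sum_{j\le i}\bm{\Delta}_j\big)^{\land}$ simultaneously, which forces a ``sufficiently exciting motion and correction'' hypothesis on the trajectory. Everything else --- the telescoping identity, the cancellation that produces the time--invariant block rows in the ideal case, and the direct solves of the two homogeneous linear systems --- is routine block bookkeeping by comparison.
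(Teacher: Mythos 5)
Your proof follows essentially the same route as the paper's own Appendix C: the same nilpotent-product/telescoping computation of $\hat{\textbf{F}}^{Std}_{i,0}$, the same cancellation that makes the ideal-case block rows depend only on the current rotation and the constant vector $\textbf{p}^f-\textbf{p}^r_0$ (giving the 6-dimensional null space), and the same observation that the estimate-based accumulated position block varies with $i$, which eliminates the rotational directions and leaves exactly the 3-dimensional translation subspace $\mathop{\text{span}}_{col.}[\textbf{0},\textbf{0},\textbf{I}_3,\textbf{I}_3]^T\subsetneq\mathcal{\breve{N}}^{Std}$. Your explicit flagging of the genericity requirement (that the time-varying skew matrices admit no common nonzero kernel vector, i.e.\ a sufficiently exciting trajectory with nontrivial updates) is, if anything, slightly more careful than the paper's ``in practice, generally'' justification of the same step.
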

\begin{proof}
    See Appendix C. 
\end{proof}

According to Theorem \ref{Std_OC}, due to this improper linearization for object based SLAM, standard EKF mistakenly takes spurious information into estimation, leading to overconfident estimate (inconsistency).


\section{Simulations}
\begin{figure}[t]
	\centering
	\includegraphics[width=.4\textwidth]{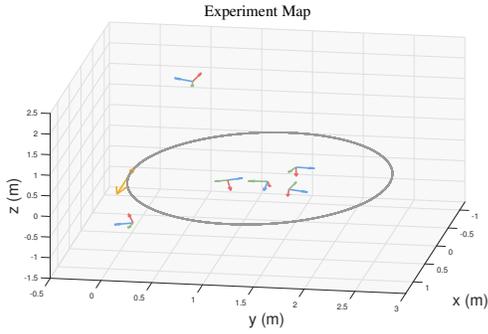}
	\caption{Simulation environment: 6 object features (the poses are shown as red-green-blue arrows) in a 3D environment, robot moves on the circle (the yellow arrow shows the initial heading of the robot).}
	\label{Map}
\end{figure}

\begin{figure*}[t]
	\begin{minipage}[h]{1\textwidth}
		\centering
		\includegraphics[width=.27\textwidth]{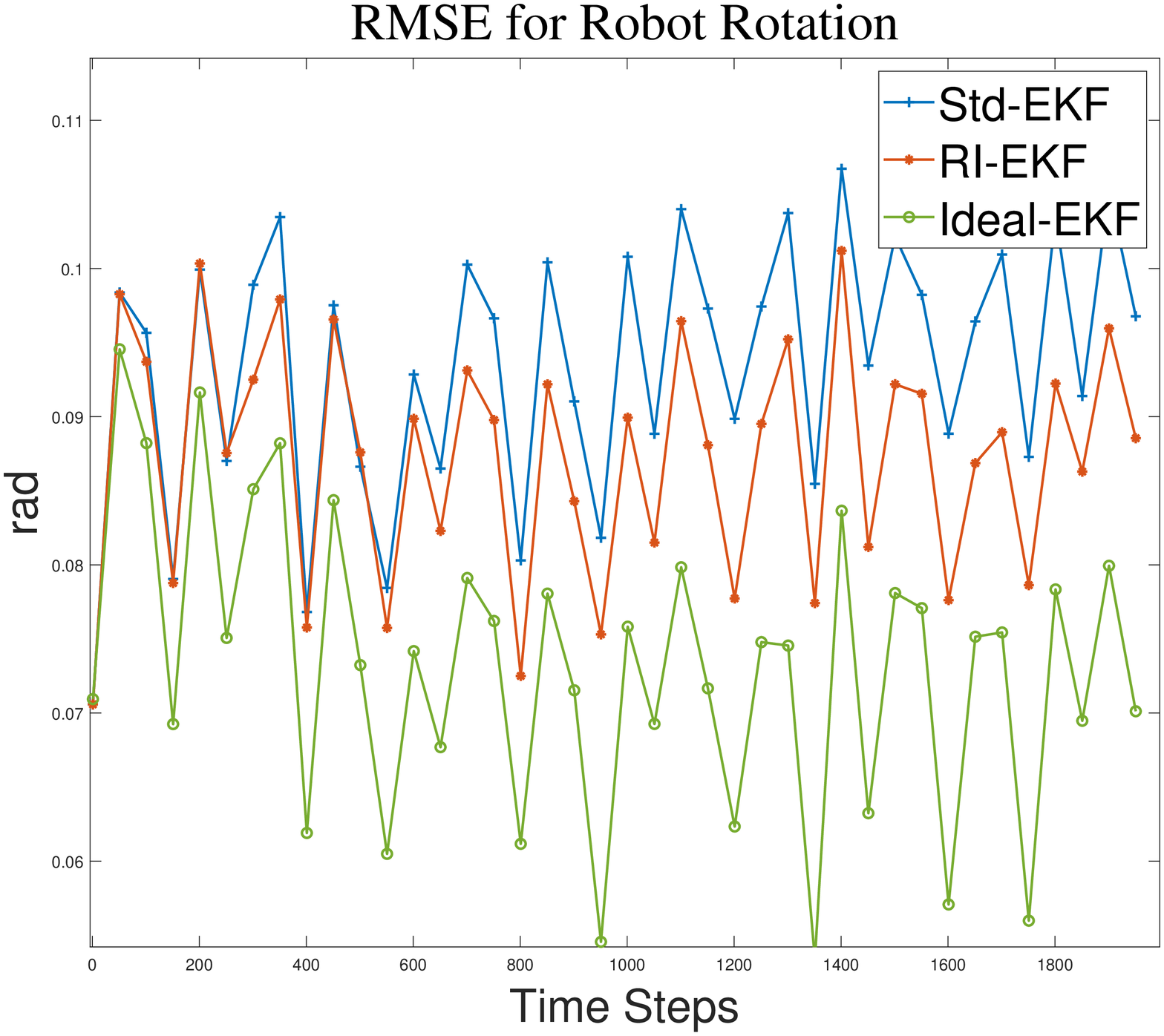}
		\includegraphics[width=.27\textwidth]{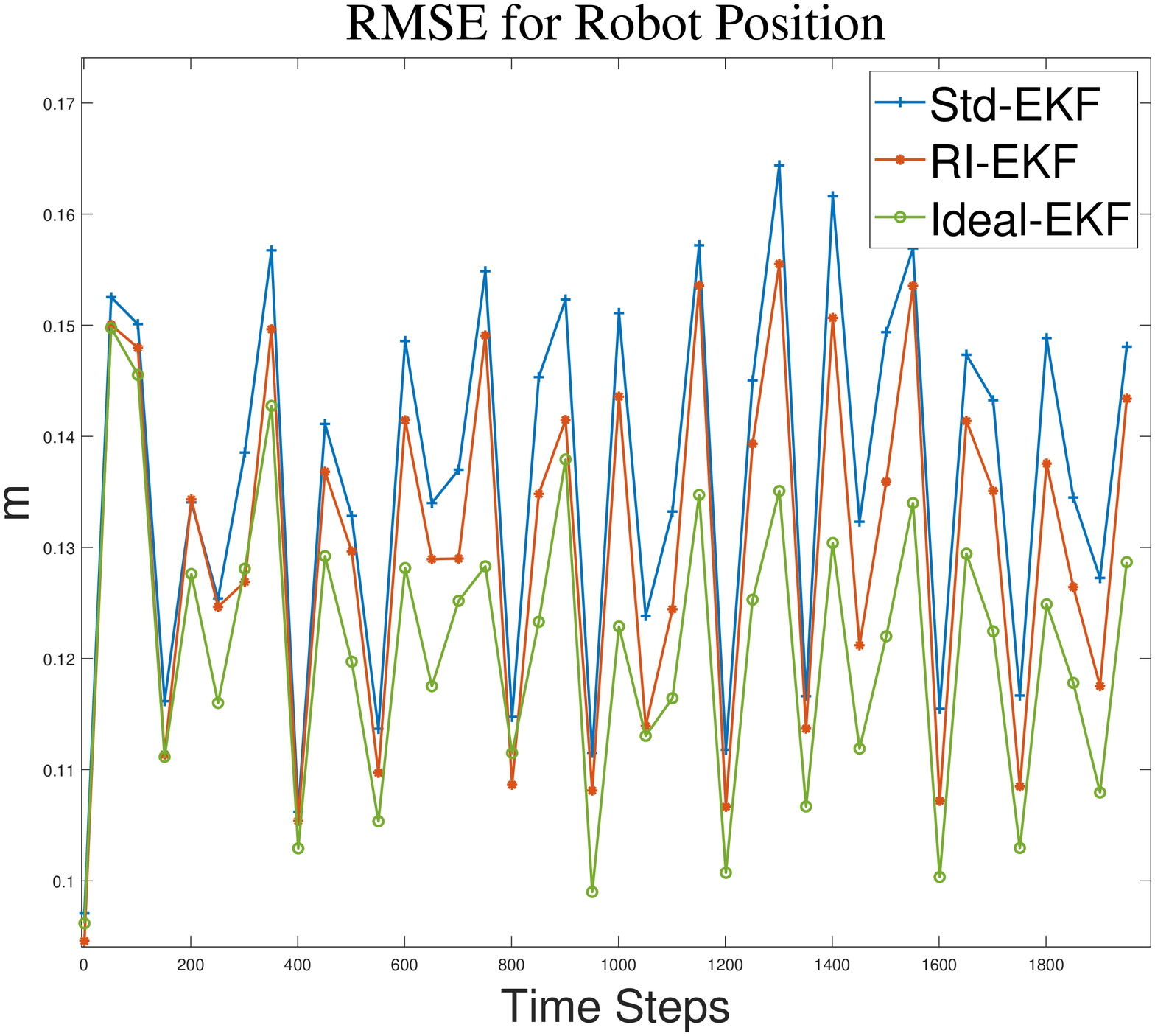}
		\includegraphics[width=.27\textwidth]{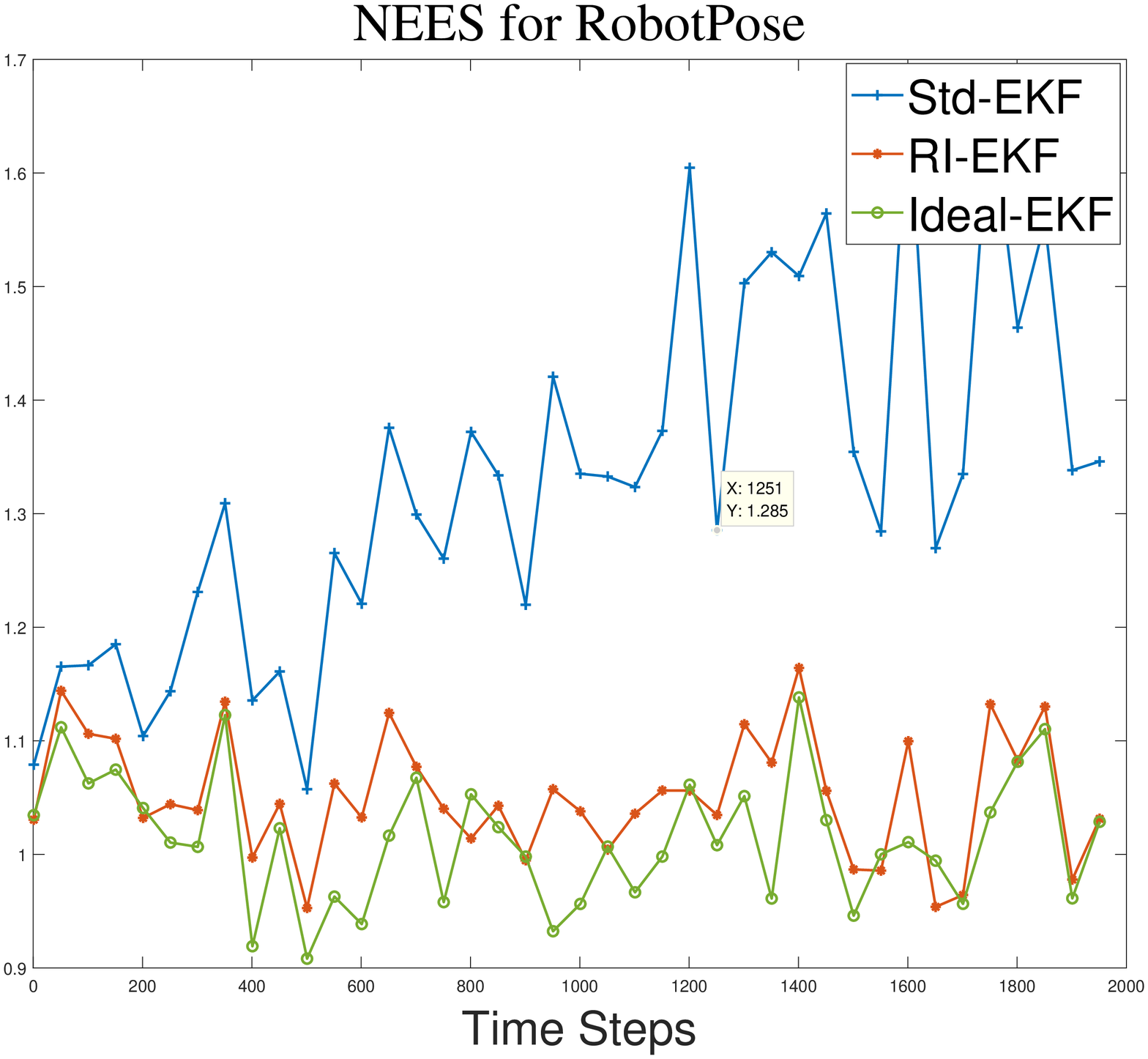}
	\end{minipage}
	\begin{minipage}[h]{1\textwidth}
		\centering
		\includegraphics[width=.27\textwidth]{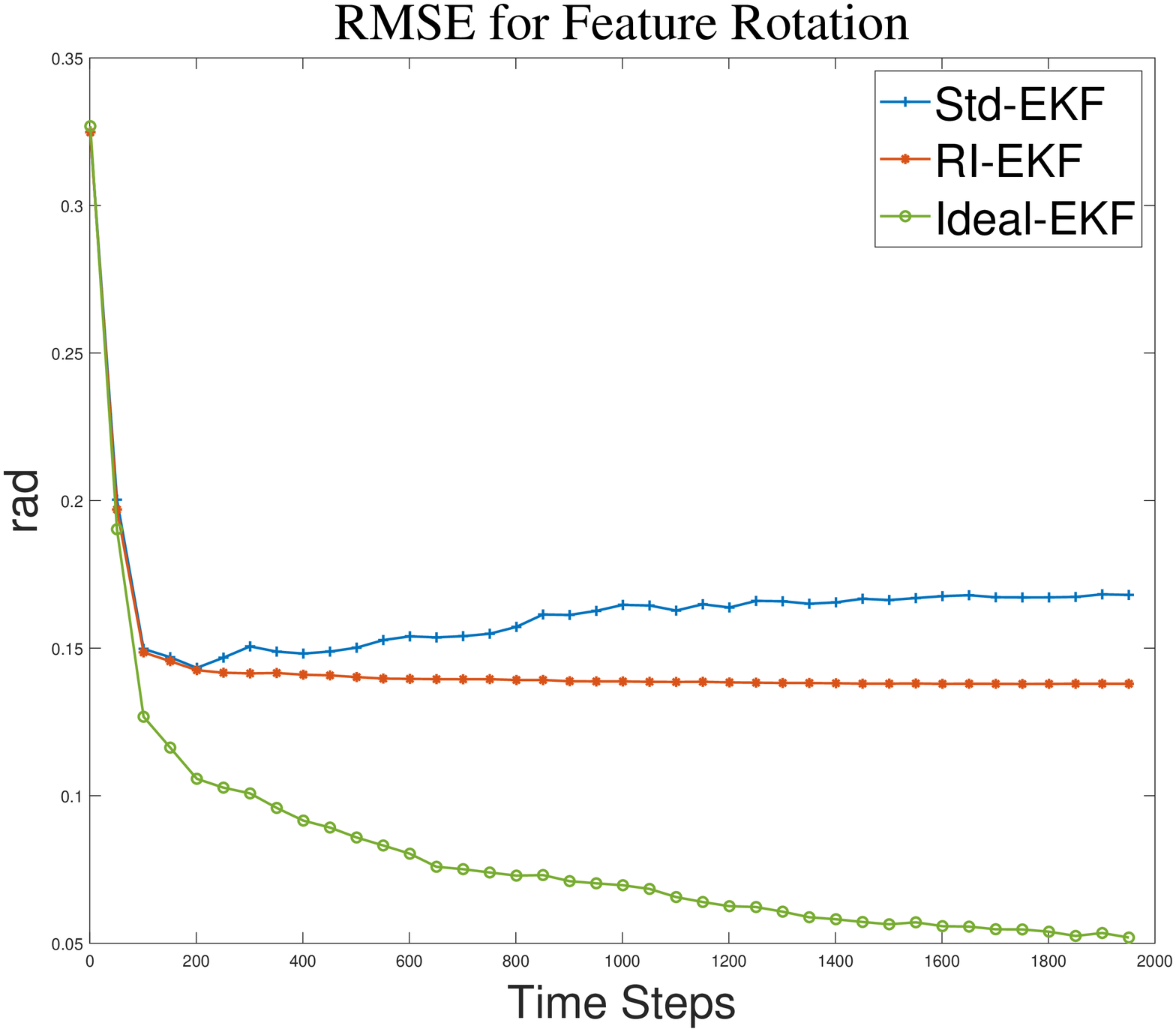}
		\includegraphics[width=.27\textwidth]{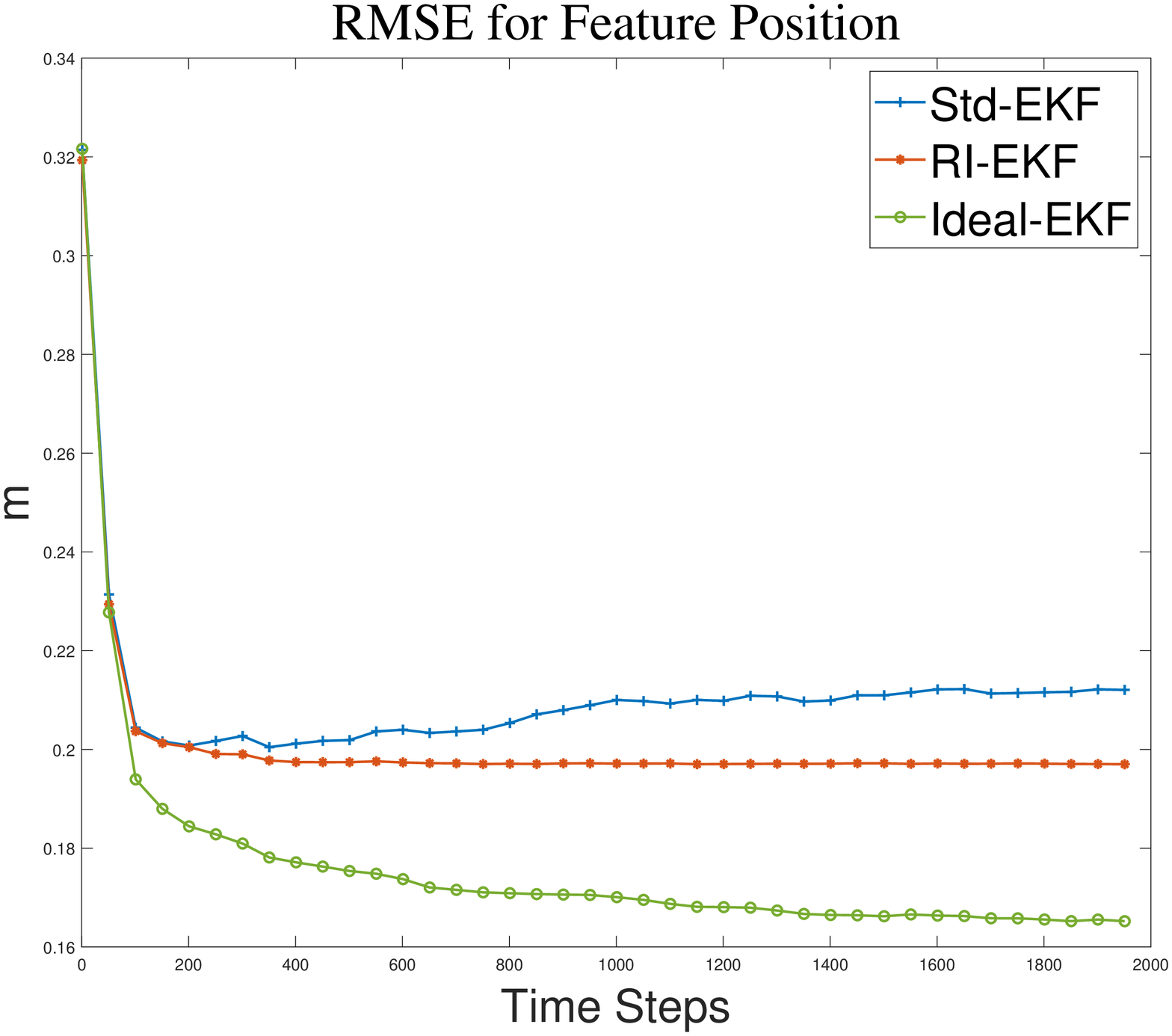}
		\includegraphics[width=.27\textwidth]{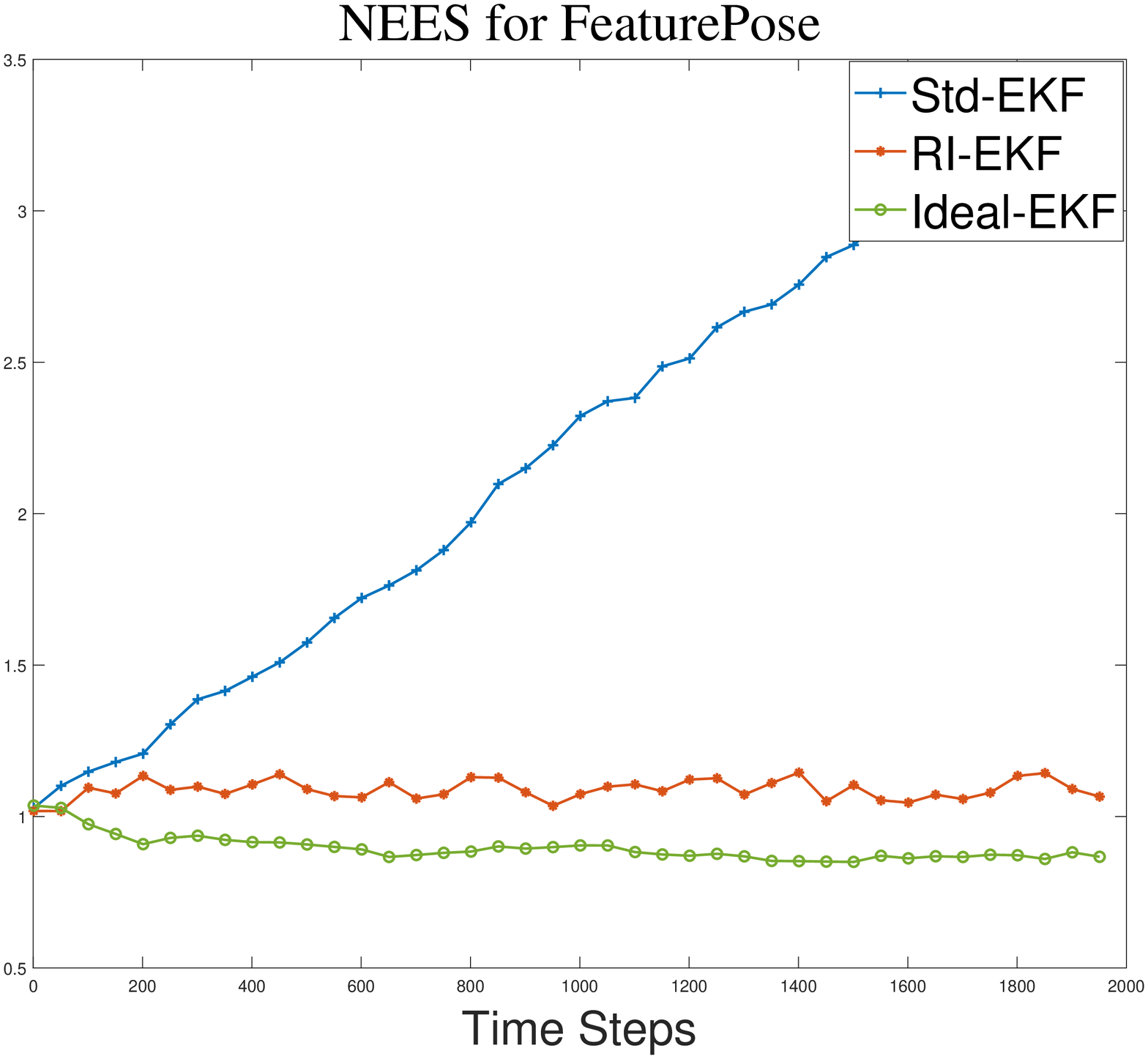}
	\end{minipage}
	\caption{Accuracy (RMSE) and consistency (NEES) of Std-EKF, RI-EKF, and Ideal-EKF in simulations.}
	\label{NEES}
\end{figure*}

\begin{table}
	
	\centering
	\caption{Simulation Results  of RI-EKF, Std-EKF and Ideal-EKF}
	\begin{tabular}{|c|c|c|c|c|c|c|c|c|c|}
		\hline
		& Std-EKF &RI-EKF &  Ideal-EKF\\
		\hline
		&\multicolumn{3}{|c|}{RMSE}\\
		\hline
		Robot Rotation (rad) & 0.0919& 0.0851 & 0.0719 \\
		Robot Position (m) & 0.1363& 0.1306 & 0.1215 \\
		Feature Rotation (rad) & 0.0271& 0.0231 & 0.0134 \\
		Feature Position (m)& 0.0365& 0.0343  & 0.0307 \\
		\hline
		&\multicolumn{3}{|c|}{NEES}\\
		\hline
		Robot Rotation & 1.5623 & 1.0622& 0.9852 \\
		Robot Position & 1.2711& 1.0304 & 1.0560 \\
		Robot Pose & 1.3435& 1.0592 & 1.0279 \\
		Feature Rotation & 3.5999& 1.1539 & 0.9091 \\
		Feature Position & 1.3457& 0.9990 & 1.0378 \\
		Feature Pose & 2.3425& 1.0849 & 0.9981 \\
		\hline
	\end{tabular}

	\label{Perf_Std_RI}
	
\end{table}

In this section, we compare our proposed RI-EKF with standard EKF (Std-EKF) and Ideal-EKF (a variant of the Std-EKF where Jacobians are evaluated at the ground truth). It should be noted that Ideal-EKF is impossible to be applied in the real scenario, since the ground truth is not available. It is just used to explain the influence of observability constraints on inconsistency. We use Normalized Estimation Error Squared (NEES) indicator to evaluate the consistency of an estimation method
\begin{equation}
	\text{NEES}=\frac{1}{m\times d}\sum_{i=1}^{m}\textbf{e}_i^T\textbf{P}_i^{-1}\textbf{e}_i,
\end{equation}
where $m$ is the number of samples, and $\textbf{e}_i$ is a $d$  dimensional error sample vector, which is estimated to be a zero mean Gaussian with a $d\times d$ covariance matrix $\textbf{P}_i$. NEES should approximately equal to 1 for large $m$, if the estimator is consistent. In addition, root mean squared error (RMSE) is used to evaluate the accuracy of each estimator.

To compute NEES, it is worth noting that for our proposed RI-EKF, the estimated covariance is corresponding to the nonlinear error defined in (\ref{eq:nonlinear_error}) instead of the standard error in the vector space. However, for a fair comparison, we still use the standard error to compute the RMSE of RI-EKF.

\subsection{Settings}

The simulation environment and the robot trajectory are shown in Fig. \ref{Map}. There are 6 object features in the environment and their poses are represented by the red-green-blue arrows. The robot moves along a circle 25 times (total length: 200m) with a constant linear velocity 0.1 m/s and a constant angular velocity $\pi /40$ rad/s. The robot is able to measure the relative poses of object features lying in a range of $[0.5 \text{m},2 \text{m}]$ around it. The units of rotation and position are radian (rad) and meter (m), respectively. The covariance matrices of odometry noise and observation noise in (\ref{ProcessModel}) and (\ref{ObsModel}) are set to 
$\bm{\Sigma}_n=\text{diag}(0.1^2,0.1^2,0.1^2, 0.1^2,0.1^2,0.1^2)$ and $\bm{\Omega}_n=\text{diag}(0.1^2,0.1^2,0.1^2, 0.1^2 0.1^2 0.1^2)$.

\subsection{Result and Analysis}

We conducted 50 Monte-Carlo simulations, i.e. $m=50$. The NEES and RMSE results are shown in Fig. \ref{NEES} and Table  \ref{Perf_Std_RI}. Fig. \ref{NEES} shows the RMSE and NEES results for robot pose and feature pose every 50 steps. Table \ref{Perf_Std_RI} lists the average RMSE and NEES for rotation and position error (in rad and m respectively) in the last time step. The results show that in this experiment, RI-EKF and Ideal-EKF perform better than Std-EKF in terms of both accuracy and consistency. Based on the comparison of Std-EKF and Ideal-EKF, we can see that the inconsistency of Std-EKF mainly comes from the inaccuracy of linearization points. And according to the analysis in Section \ref{Ob_anal}, these linearization points in Std-EKF break the observability constraints,  leading Std-EKF to obtain spurious information from unobservable subspace. As a result, its estimation will be more inaccurate and its estimated covariance is smaller than the actual uncertainty, and become more and more inconsistent over time. In contrast, RI-EKF remains consistent (NEES $\approx 1$) in a longer duration, behaving like Ideal-EKF. The analysis for RI-EKF in Section \ref{Ob_anal}  indicates that RI-EKF naturally maintains the observability constraints, as in Ideal-EKF. And each Jacobians are evaluated at the latest estimate. These make the results of RI-EKF more reliable than those of Std-EKF.

\section{Real Data Experiments}
In this section, we test our algorithm on a real dataset YCB-Video \cite{YCB} and compare it with Std-EKF, DVO \cite{DVO} and ORB-SLAM2 \cite{orbslam2} to show its effectiveness.  All of these algorithms are fed by the RGB-D images from YCB-Video. Four sequences (0019, 0036, 0041, and 0049), which have relatively long trajectory, are selected to be used in the experiments (Fig. \ref{RealData}).  Different from simulation, the data collected from real world may have many outliers.  The matches of point clouds in Sequence 0019 are very accurate, but in other sequences, some object observations are very inaccurate, as shown in the last three images in the lower row of Fig. \ref{RealData}. 

In order to make the algorithms robust, we need to detect and remove outliers. In addition, there is no information about odometry in these data sequences. To apply our algorithms, we make a simple constant velocity assumption for these data sequences which are obtained at low speed. 

\subsection{Dataset and Object Detection}
YCB-Video dataset contains 21 objects with various textures from YCB objects. There are 92 RGB-D videos used for training and testing object detection, in which 80 videos are used for training and 2949 keyframes from the rest 12 videos are used for testing. Besides, 80000
synthetic images are released for training. There are many scenes of stacking objects with partial occlusion, as is shown in Fig. \ref{RealData}.

\subsection{Observation of Object Features}
To get the observation of the object features, in the  front-end, we utilize the algorithm called REDE from  \cite{REDE}, which is an end-to-end object pose estimator using RGB-D data as inputs. In YCB dataset, the results of the pose estimator can realize $98.9\%$ recall under the metric of average ADD \cite{add}. The outputs of REDE are directly fed into (\ref{ObsModel}) as observations.

\subsection{Constant Velocity Assumption}
Since the data are collected from a low speed camera, we assume that the camera is moving at constant velocity. Here we just take a very simple method to obtain the odometry. We assume the angular velocity is zero with noise. And expected linear velocity is the average of previous estimations.
\subsection{Outlier Removal}
Suppose there is an object feature observed in the $n+1$ step, and $\textbf{Z}\in \mathbb{SO}(3)\times \mathbb{R}^3$ is its observation. Before update the state vector, we will first compute $\textbf{y}=\textbf{Z}\boxminus h(\textbf{X}_{n+1|n},\textbf{0})$ in (\ref{Y}). According to EKF framework, 
\begin{equation}
	\begin{array}{lll}
		\textbf{y}\approx \textbf{H}_{n+1}\bm{\xi}_{n+1|n}+\textbf{v}_{n+1}\\
	\end{array}
\end{equation}
where $\textbf{H}_{n+1}$ is the Jacobian of observation function evaluated at $\textbf{X}_{n+1|n}$, and $\textbf{v}_{n+1}\sim N(\textbf{0},\bm{\Omega}_{n+1})$ is the noise. If the estimation is accurate, then $\textbf{y}$ should form a zero mean Gaussian distribution with covariance 
\begin{equation}
	\textbf{P}_y=\textbf{H}_{n+1}\textbf{P}_{n+1|n}(\textbf{H}_{n+1})^T+\bm{\Omega}_{n+1}.
	\label{Py}
\end{equation}
Therefore, if all the elements of $\textbf{y}$, i.e. $\textbf{y}(k),\ k=1,\cdots, 6$, are in its $3\sigma$ bound, i.e.  $$|\textbf{y}(k)|<3\sqrt{\textbf{P}_y(k,k)},\ k=1,\cdots 6,$$
then we will use $y$ to update the state. Otherwise, the observation will be considered as an outlier. The EKF methods with this outlier removal are called Robust EKF methods in the following. 

\begin{figure*}[t]
	\centering
	\includegraphics[width=1\textwidth]{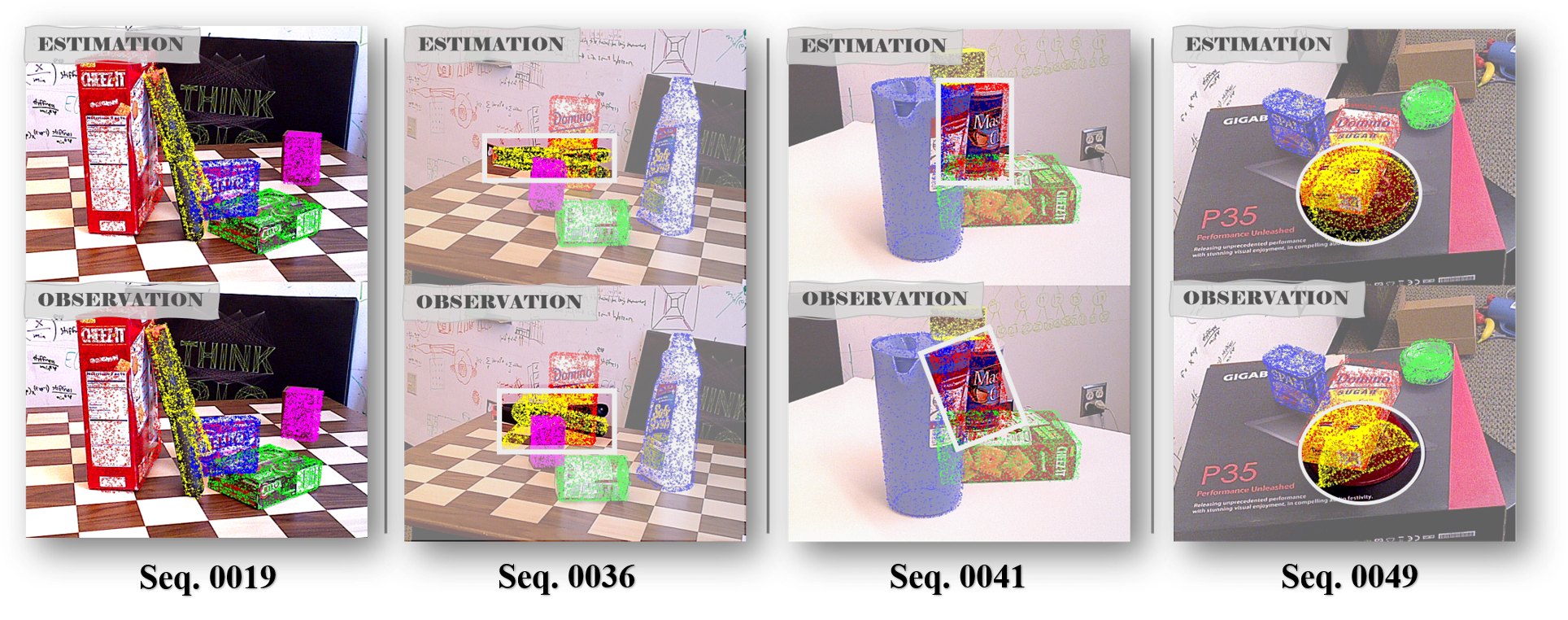}
	\caption{Sample images from the four sequences in YCB-Video Dataset \cite{YCB} used in the experiments. For each column, the lower row shows an image with feature observations while the upper row shows the final estimate from our proposed method.}
	\label{RealData}
\end{figure*}

\begin{table}
	
	\centering
	\caption{Average RMSE* results from four RGB-D sequences of YCB-Video dataset \cite{YCB}.}
	\begin{tabular}{|c|c|c|c|c|}
		
		\hline
		
		& 0019 & 0036 & 0041 & 0049 \\
		\hline
		
		DVO & .015/.029 & .021/.055 & .023/.038 & .010/.046 \\
		ORB-SLAM2 & .012/.030 & .017/.044 & .014/.028 & \textbf{.004}/{.031} \\
		Std-EKF & .007/.009 & .012/.017 & .008/.010 & .201/.252 \\
		Robust Std-EKF & .007/.009 & .015/.021 & .008/.010 & .011/.023 \\
		RI-EKF & \textbf{.004}/\textbf{.006} & \textbf{.004}/.007 & \textbf{.006}/\textbf{.007} & .156/.205 \\
		Robust RI-EKF & \textbf{.004}/\textbf{.006} & \textbf{.004}/\textbf{.006} & \textbf{.006}/\textbf{.007} & {.007}/\textbf{.022}\\
		\hline
	\end{tabular}
	{*RMSE of Robot Position (m)/RMSE of Robot Rotation (rad)}
	
	\label{comparisons}
	
\end{table}
\subsection{Results and Analysis}
We compare our algorithm with DVO \cite{DVO}, ORB-SLAM2 \cite{orbslam2} and Std-EKF using the four data sequences. The  average RMSE for robot rotation and position are shown in Table \ref{comparisons}. 
In general, Robust RI-EKF performs the best on these four data sequences as listed in Table \ref{comparisons}. Although both DVO and ORB-SLAM2 utilize all the features in the whole trajectory, they only exploit low-level features (point features). However, the object features considered in our methods have broader perspective loop closures, longer feature track, and more intrinsic constraints information between low-level features. Hence, DVO and ORB-SLAM2 are not fairly comparable. This could explain why the filter based Robust RI-EKF outperforms these two optimization based methods.


The comparison of Robust RI-EKF, Robust Std-EKF, RI-EKF, and Std-EKF on Sequence 0049 shows that robust methods are far better on such inaccurate data sequences. 
We also note that some RMSE of Robust Std-EKF are larger than that of Std-EKF, especially on Sequence 0036. This is mainly caused by the inconsistency of Std-EKF which results in mistaken deletion for correct data. 

Fig. \ref{3S_RI_Std} shows the errors of robot pose estimates and the 3$\sigma$ bounds of each component for Robust RI-EKF and Robust Std-EKF on Sequence 0036, respectively. Robust RI-EKF perfoms well in terms of consistency, while Robust Std-EKF underestimates the uncertainty of the state in the latter half of the sequence. This can finally result in larger errors. In contrast, the estimates by Robust RI-EKF are more reliable, making our outlier removal more effective.

Fig. \ref{RMSE_obj} shows the RMSE of objects in every 100 steps. They illustrate that Robust RI-EKF also broadly generates more accurate estimates on the object poses than Robust Std-EKF.

In general, from the real data experiments, we can see Robust RI-EKF can generate good estimation results. In Fig. \ref{RealData}, the upper row images show the estimated objects from Robust RI-EKF, which significantly improve the corresponding observations shown in the lower row images for Sequences 0036, 0041 and 0049. 
Although the difference for Sequence 0019 is not significant, it can be seen in Fig. \ref{Display2} that the estimation results is more closer to the true object as compared with the observations. 
{\tiny }

\begin{figure*}[t]
	\centering
	\begin{minipage}[h]{1\textwidth}
		\centering
		\includegraphics[width=.16\textwidth]{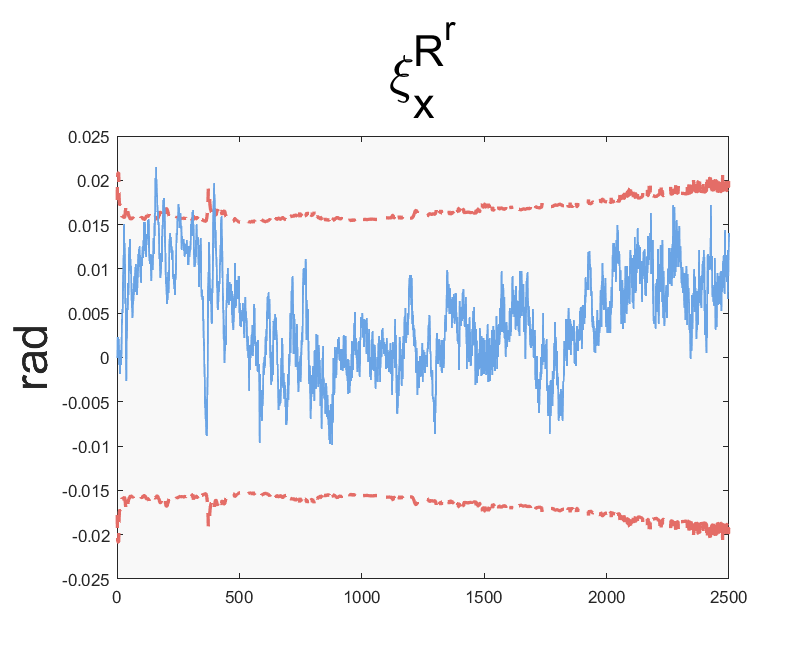}
		\includegraphics[width=.16\textwidth]{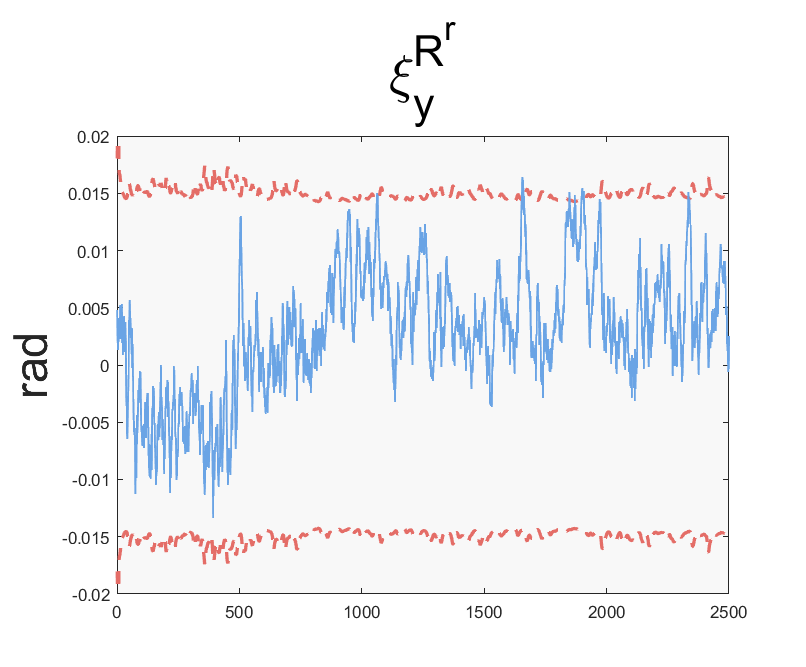}
		\includegraphics[width=.16\textwidth]{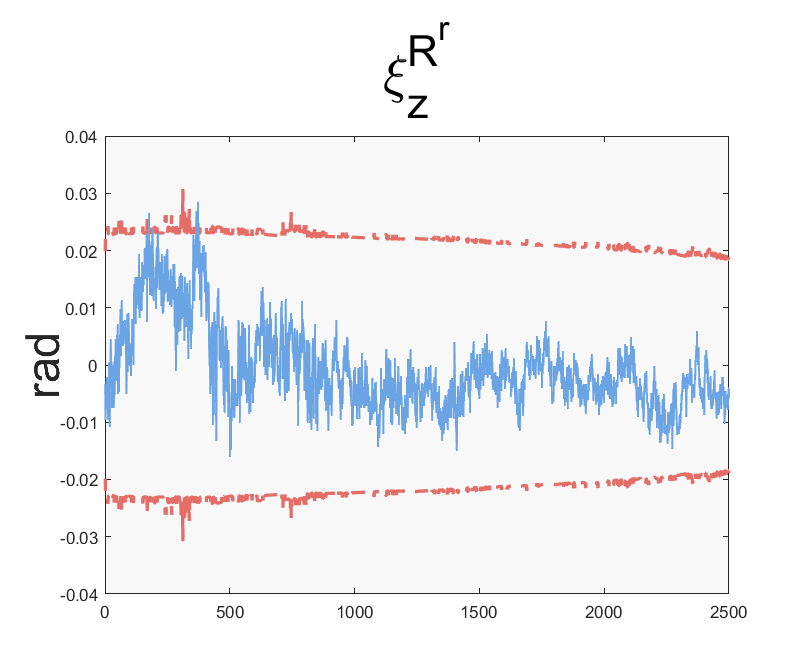}
		\includegraphics[width=.16\textwidth]{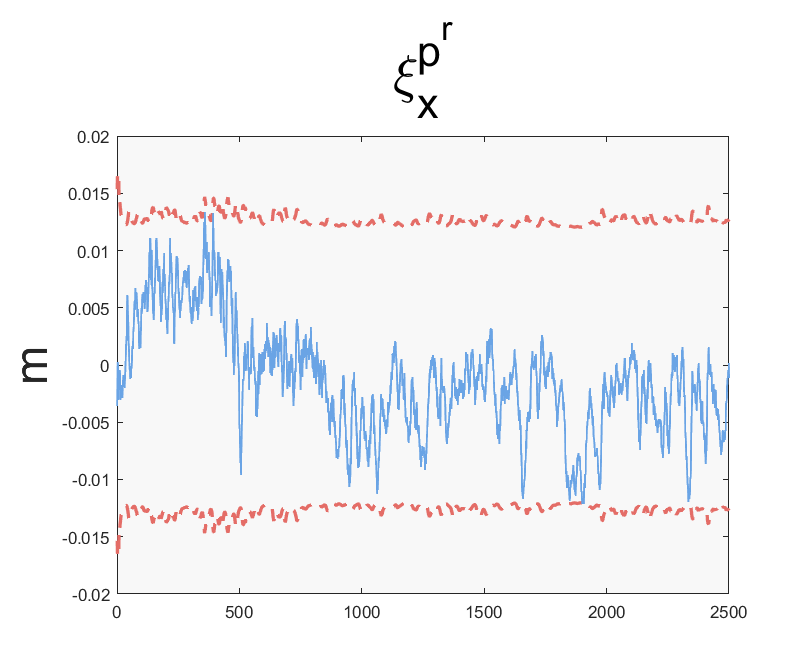}
		\includegraphics[width=.16\textwidth]{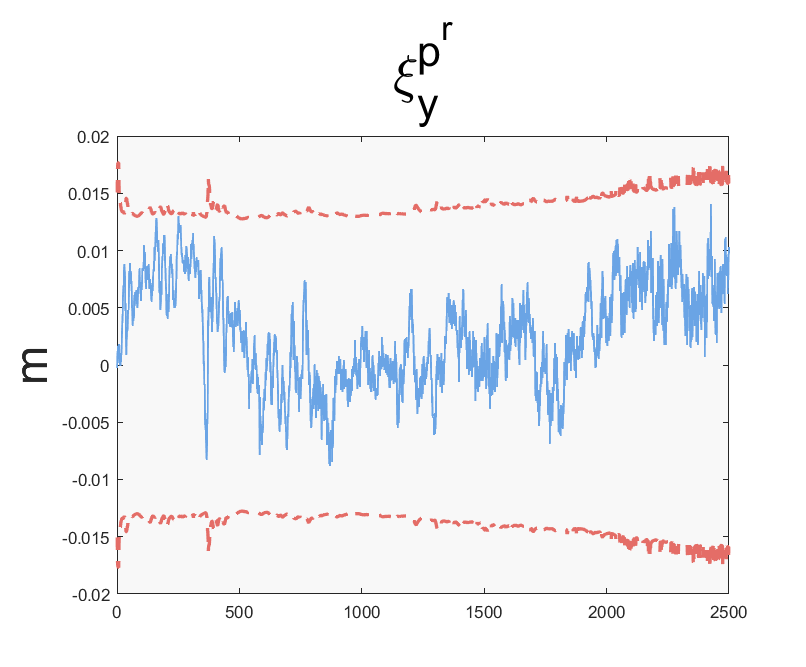}
		\includegraphics[width=.16\textwidth]{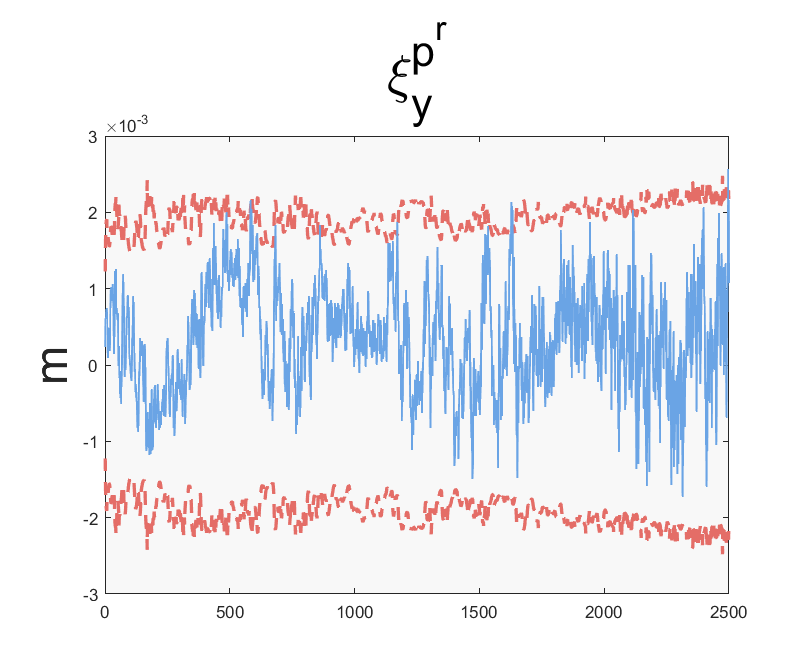}
	\end{minipage}
	\centering
	\begin{minipage}[h]{1\textwidth}
		\centering
		\includegraphics[width=.16\textwidth]{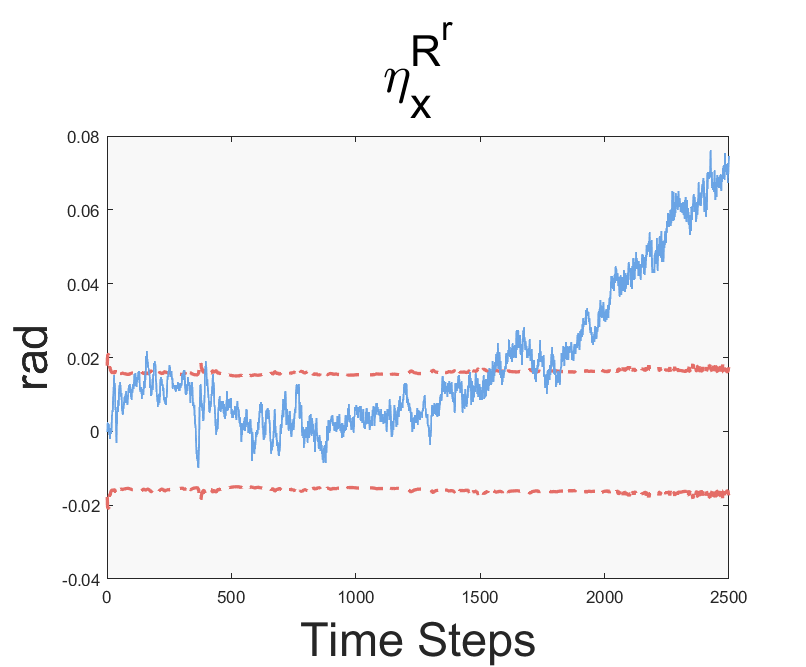}
		\includegraphics[width=.16\textwidth]{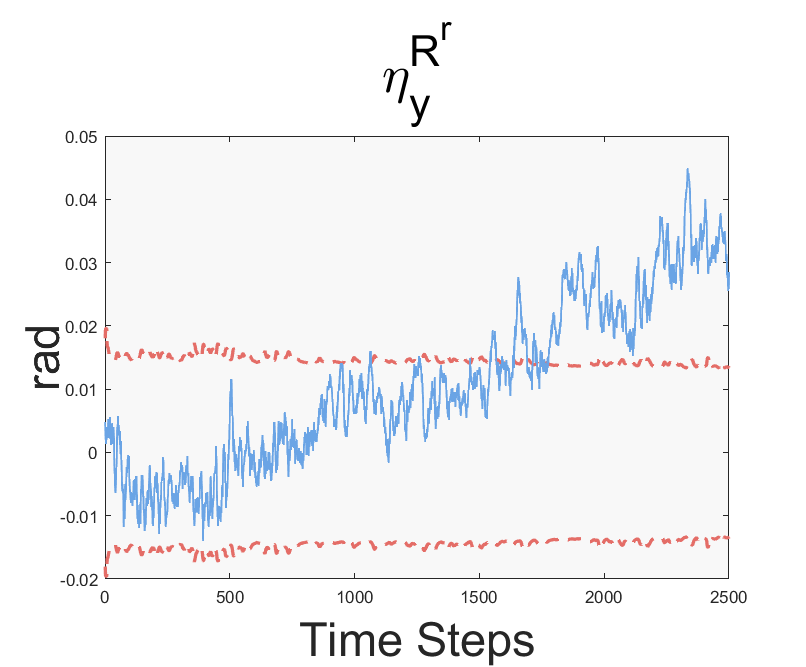}
		\includegraphics[width=.16\textwidth]{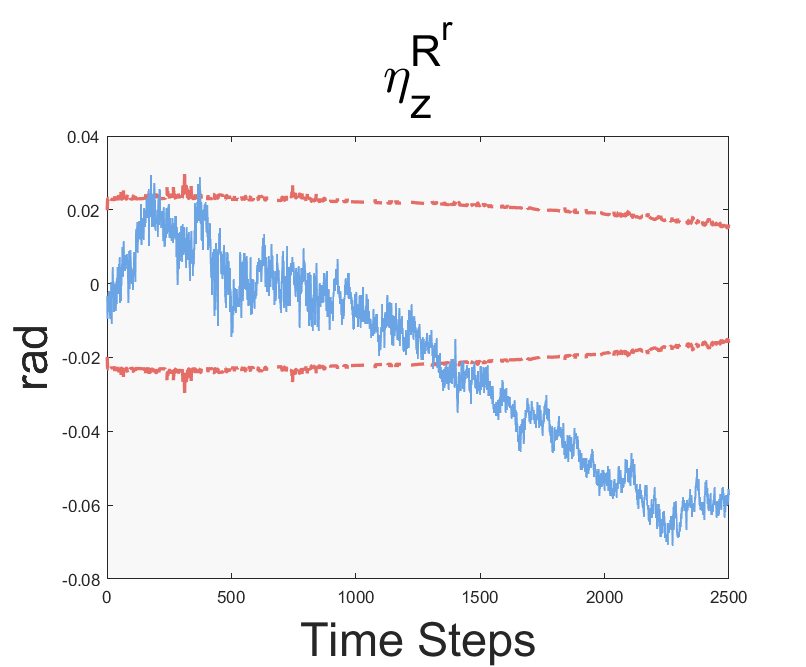}
		\includegraphics[width=.16\textwidth]{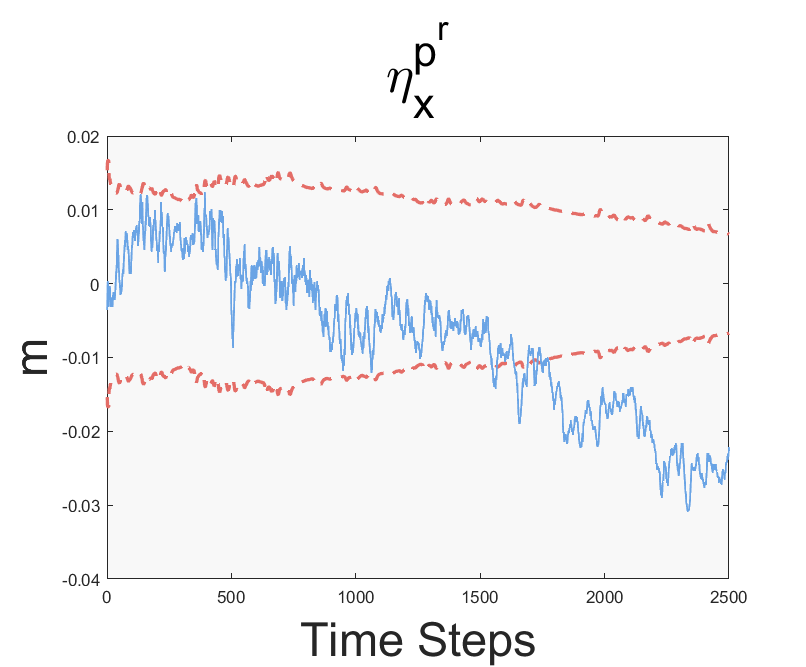}
		\includegraphics[width=.16\textwidth]{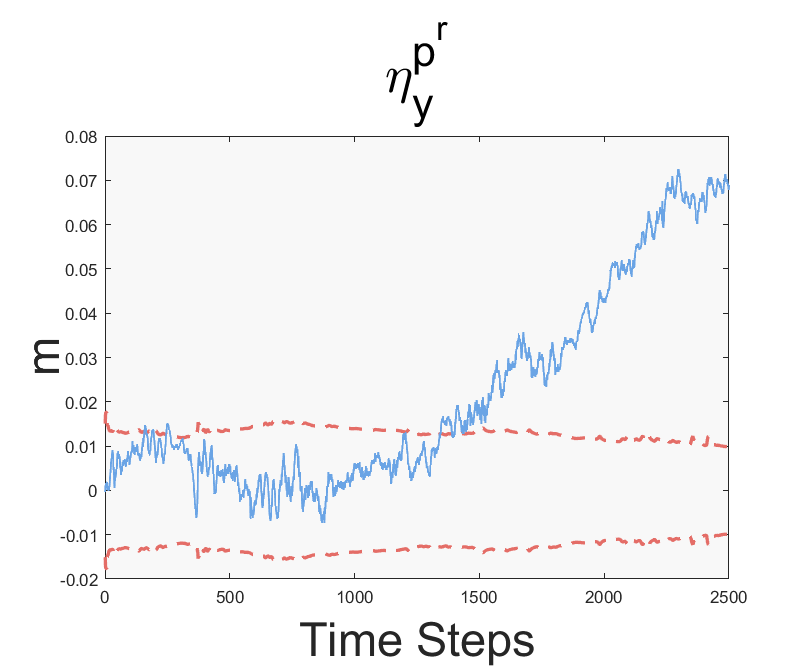}
		\includegraphics[width=.16\textwidth]{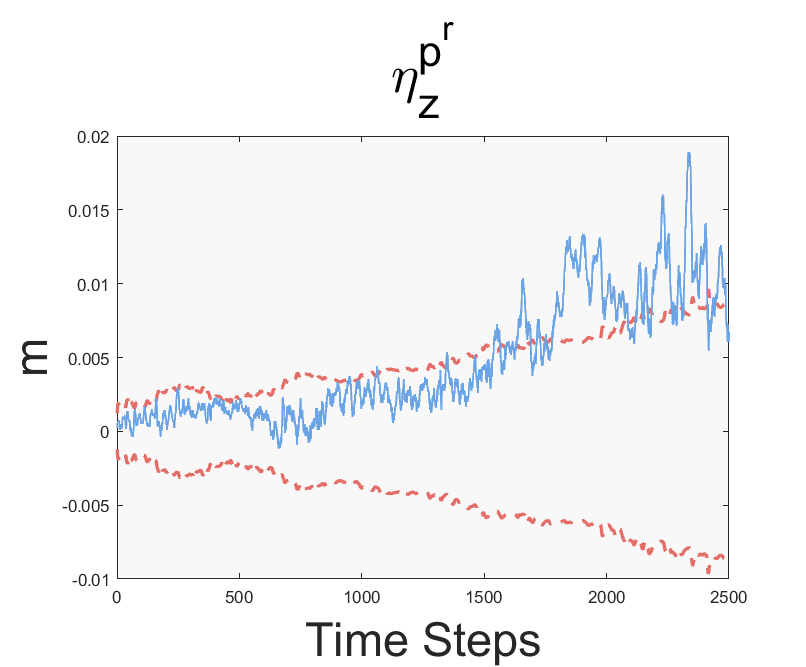}
	\end{minipage}
	\caption{Robot pose estimate errors and the corresponding 3$\sigma$ bounds for Sequence 0036: the upper six figures are for Robust RI-EKF, the lower six figures are for Robust Std-EKF.}
	\label{3S_RI_Std}
\end{figure*}

\begin{figure}[t]
	\centering
	\begin{minipage}[h]{.5\textwidth}
		\centering
		\includegraphics[width=1\textwidth]{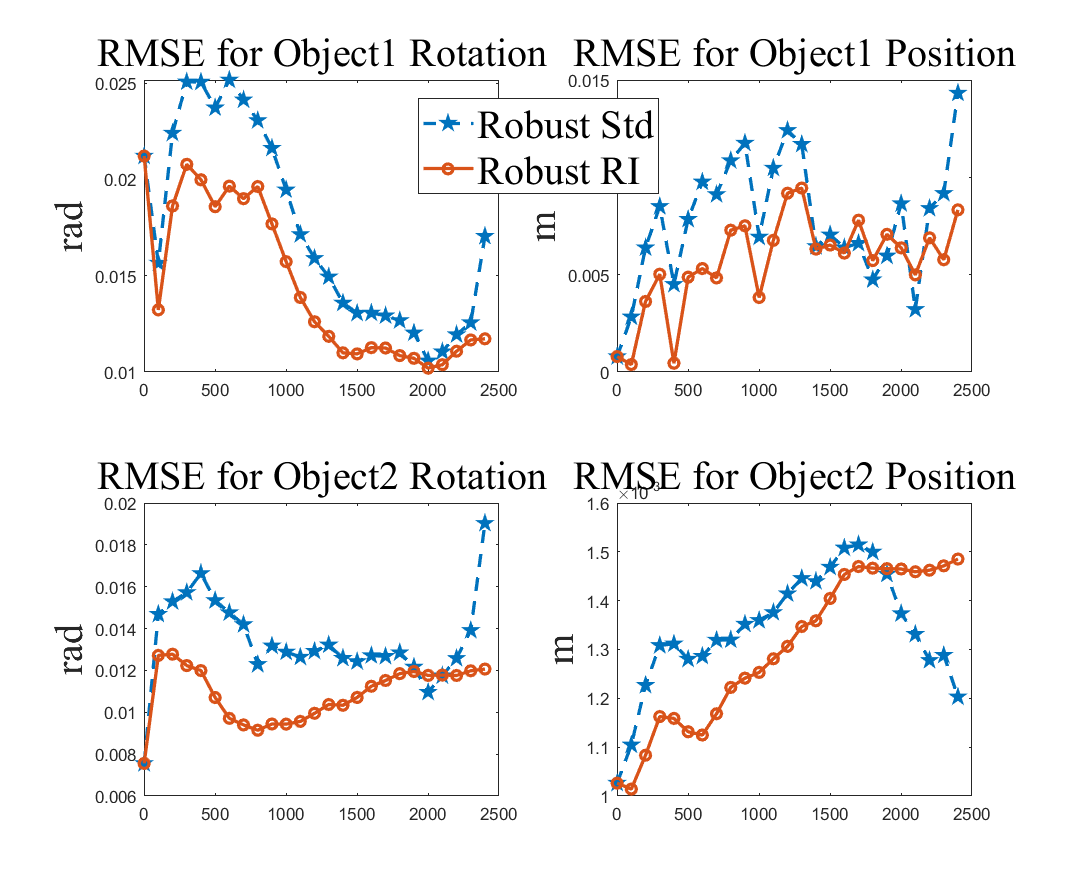}
	\end{minipage}
	\centering
	\begin{minipage}[h]{.5\textwidth}
		\centering
		\includegraphics[width=1\textwidth]{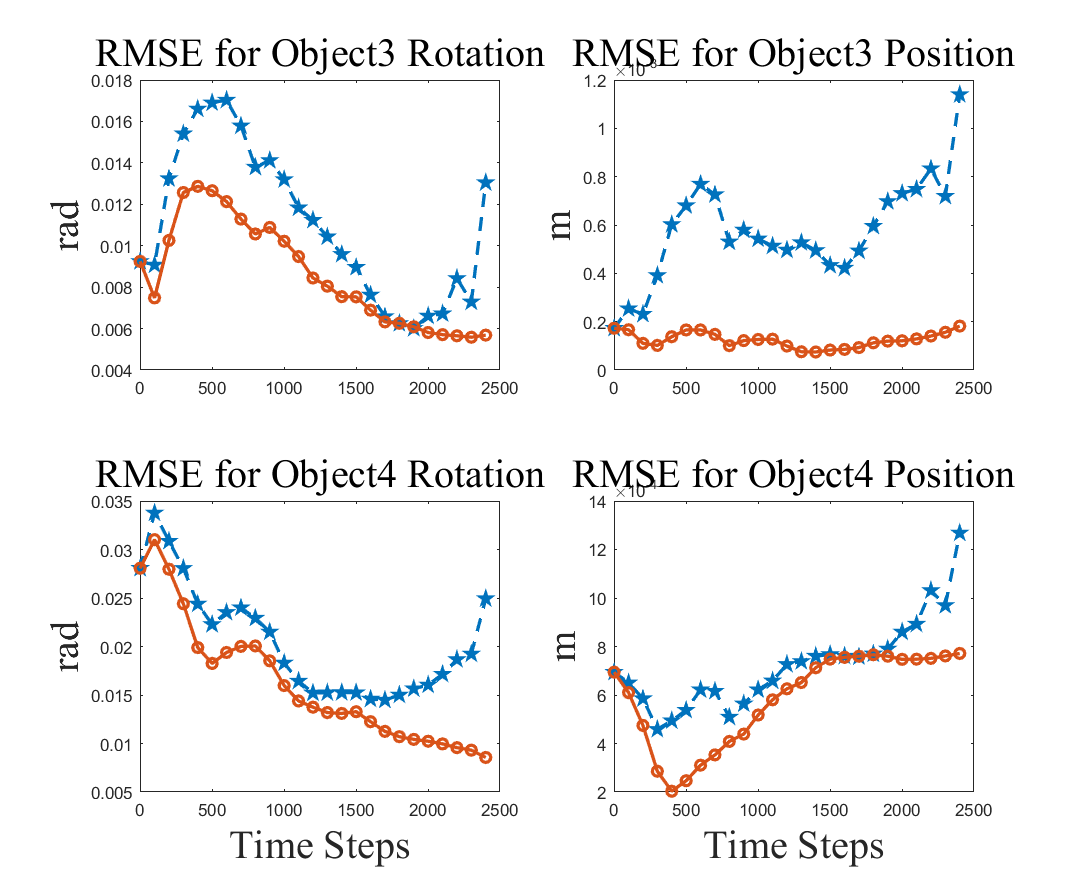}
	\end{minipage}
	\caption{RMSE of objects in Sequence 0041 for Robust RI-EKF and Robust Std-EKF. (Object1: Master Chef Can; Object2: Cracker Box; Object3: Pitcher Base; Object4: Foam Brick)}
	\label{RMSE_obj}
\end{figure}

\begin{figure}[t]
	\centering
	\includegraphics[width=.4\textwidth]{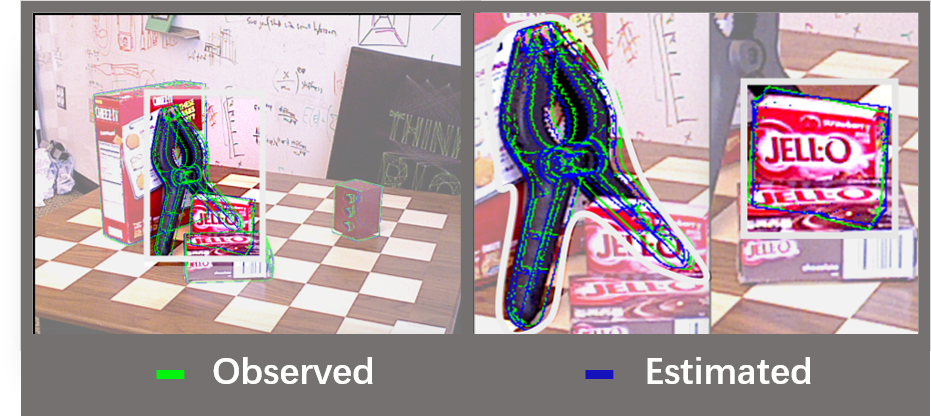}
	\caption{A sample result of Robust RI-EKF for Sequence 0019.}
	\label{Display2}
\end{figure}

\section{Conclusion}

In this work, we propose a right invariant EKF (RI-EKF) algorithm for object based SLAM, where object features are represented by 3D poses and are estimated together with the latest robot pose. From theoretical analysis, we prove that our RI-EKF generated by the proposed Lie group can automatically maintain the correct observability properties. This is different from standard EKF with object features that does not have the correct observability property. Results from simulations and real data experiments confirm the good performance of the proposed RI-EKF algorithm.

In this paper, we focus on the SLAM back-end and assume the objects observed are within a given database such that they can be detected and matched relatively easily from the SLAM front-end. In the future, we will investigate the more challenging object based SLAM problem where the objects in the environments are more general and may not belong to a known database.



\section*{APPENDIX}
In this appendix, we provide the mathematical derivation for the feature initialization/augmentation method in Section \ref{sec:initialization}, and the proofs of the two theorems in Section \ref{Ob_anal}.
\subsection{Mathematical Derivation for Initialization Method}\label{Math}
Note that
\begin{equation}
	\begin{array}{rll}
		\exp(\bm{\xi}^{R^r})&=\textbf{R}^r(\hat{\textbf{R}}^r)^T,\\
		\exp(\bm{\xi}^{R^{f}})&=\textbf{R}^{f}(\hat{\textbf{R}}^{f})^T,\\
		\bm{\xi}^{p^r}&\approx \textbf{p}^r-\exp(\bm{\xi}^{R^r})\hat{\textbf{p}}^r,\\
		\bm{\xi}^{p^{f}}&\approx \textbf{p}^{f}-\exp(\bm{\xi}^{R^r})\hat{\textbf{p}}^{f}.
	\end{array}
\end{equation}
And considering the observation model of the new feature, $(\textbf{R}^{f_{\text{new}}},\textbf{p}^{f_{\text{new}}})$,
\begin{equation}\label{eq:obs_model_app}
	\begin{array}{rll}
		\textbf{R}^z&=\exp^{\mathbb{SO}(3)}((\textbf{v}^R)^{\land})(\textbf{R}^r)^T\textbf{R}^{f_{\text{new}}},\\
		\textbf{p}^z&= (\textbf{R}^r)^T(\textbf{p}^{f_{\text{new}}}-\textbf{p}^r)+\textbf{v}^p,
	\end{array}
\end{equation}
we have
\begin{equation}
	\begin{array}{rll}
		\textbf{R}^{f_{\text{new}}}&=\textbf{R}^r\exp^{\mathbb{SO}(3)}((-\textbf{v}^R)^{\land})\textbf{R}^z\\
		&=\exp^{\mathbb{SO}(3)}((\bm{\xi}^{R^r})^{\land})\hat{\textbf{R}}^r\exp^{\mathbb{SO}(3)}((-\textbf{v}^R)^{\land})\textbf{R}^z\\
		&\approx\exp^{\mathbb{SO}(3)}((\bm{\xi}^{R^r}-\hat{\textbf{R}}^r\textbf{v}^R)^{\land})\hat{\textbf{R}}^r\textbf{R}^z\\
	\end{array}
	\label{initialRp_app}
\end{equation}
and 
\begin{equation}
	\begin{array}{rll}
		\textbf{p}^{f_{\text{new}}}&=\textbf{p}^r+\textbf{R}^r\textbf{p}^z-\textbf{R}^r\textbf{v}^p\\&\approx\exp^{\mathbb{SO}(3)}(\bm{\xi}^R)\hat{\textbf{p}}^r\\
		&\ \ +\bm{\xi}^{p^r}+\exp^{\mathbb{SO}(3)}(\bm{\xi}^{R^r})\hat{\textbf{R}}^r\textbf{p}^z-\hat{\textbf{R}}^r\textbf{v}^p.
	\end{array}
\label{initialp_app}
\end{equation}
Since $\bm{\xi}\sim N(\textbf{0},\textbf{P})$ and $\textbf{v}\sim N(\textbf{0},\bm{\Omega})$, the expectation (or called estimation) of the new object feature, $(\hat{\textbf{R}}^{f_{\text{new}}},\hat{\textbf{p}}^{f_{\text{new}}})$, related to the observation $\textbf{Z}$ is given by
\begin{equation}
	\begin{array}{rll}
		\hat{\textbf{R}}^{f_{\text{new}}}&=\hat{\textbf{R}}^r\textbf{R}^z,\\
		\hat{\textbf{p}}^{f_{\text{new}}}&=\hat{\textbf{p}}^r+\hat{\textbf{R}}^r\textbf{p}^z.
	\end{array}\label{expectations_app}
\end{equation}

For their covariance, denote $\bm{\xi}^{R^{f_{\text{new}}}}$ and $\bm{\xi}^{p^{f_{\text{new}}}}$ as the corresponding components for $\textbf{R}^{f_{\text{new}}}$ and $\textbf{p}^{f_{\text{new}}}$ in the new error state vector, respectively. Notice that by the proposed Lie group structure, we have
\begin{equation}
	\begin{array}{rll}
		\exp^{\mathbb{SO}(3)}(\bm{\xi}^{R^{f_{\text{new}}}})&=\textbf{R}^{f_{\text{new}}}(\hat{\textbf{R}}^{f_{\text{new}}})^T,&\\
		\bm{\xi}^{p^{f_{\text{new}}}}&\approx 
		\textbf{p}^{f_{\text{new}}}-\exp^{\mathbb{SO}(3)}(\bm{\xi}^{R^r})\hat{\textbf{p}}^{f_{\text{new}}}.\\
	\end{array}\label{initialxi1_app}
\end{equation}
Then, combining equations (\ref{initialRp_app}), (\ref{initialp_app}), (\ref{expectations_app}), and  (\ref{initialxi1_app}), we obtain
\begin{equation}
	\begin{array}{rll}
		\bm{\xi}^{R^{f_{\text{new}}}}\approx \bm{\xi}^{R^r}-\hat{\textbf{R}}^r\textbf{v}^R\\
	\end{array}
\end{equation}
and
\begin{equation}
	\begin{array}{rll}
		\bm{\xi}^{p^{f_{\text{new}}}}\approx\bm{\xi}^{p^r}-\hat{\textbf{R}}^r\textbf{v}^p. 
	\end{array}
\end{equation}
Therefore, consider the state covariance before augmentation, 
$$\textbf{P}=\left[
\begin{array}{ccc}
	\textbf{P}^{R,R}& \textbf{P}^{R,p}\\
	\textbf{P}^{p,R}& \textbf{P}^{p,p}
\end{array}
\right],$$ where $\textbf{P}^{i,j}=\text{Cov}(\bm{\xi}^i,\bm{\xi}^j)$, $\bm{\xi}^i,\bm{\xi}^j\in\{\bm{\xi}^R,\bm{\xi}^p\}$ in (\ref{xi_form}). And the covariance of an observation $$\bm{\Omega}=\left[
\begin{array}{ccc}
	\bm{\Omega}^{R,R}& \bm{\Omega}^{R,p}\\
	\bm{\Omega}^{p,R}& \bm{\Omega}^{p,p}
\end{array}
\right], $$ $\bm{\Omega}^{i,j}=\text{Cov}(\textbf{v}^i,\textbf{v}^j),\ \textbf{v}^i,\textbf{v}^j\in\{\textbf{v}^R,\textbf{v}^p\}$. Then the augmented covariance $\textbf{P}_{\text{aug}}$ is computed as
\begin{equation}
	\textbf{P}_{\text{aug}}=\left[
	\begin{array}{cccccc}
		\textbf{P}^{R,R}& \textbf{P}^{R,R}\textbf{M}_1^{T}& \textbf{P}^{R,p}& \textbf{P}^{R,p}\textbf{M}_2^{T}\\
		\textbf{M}_1P^{R,R}& \textbf{P}_f^{R,R}& \textbf{M}_1\textbf{P}^{R,p}& \textbf{P}_f^{R,p}\\
		\textbf{P}^{p,R}& \textbf{P}^{p,R}\textbf{M}_1^{T}& \textbf{P}^{p,p}& \textbf{P}^{p,p}\textbf{M}_2^{T}\\
		\textbf{M}_2\textbf{P}^{p,R}& (\textbf{P}_f^{R,p})^T& \textbf{M}_2\textbf{P}^{p,p}& \textbf{P}_f^{p,p}\\
	\end{array}
	\right],
	\label{Pnew_app}
\end{equation}
where
\begin{equation}
	\begin{array}{rll}
		\textbf{M}_1&=[\textbf{I}_3 \ \textbf{0}_{3,3K}],\\
		\textbf{M}_2&=[\textbf{I}_3 \ \textbf{0}_{3,3K}],\\
		\textbf{P}_f^{R,R}&=\textbf{M}_1\textbf{P}^{R,R}\textbf{M}_1^{T}+\hat{\textbf{R}}^r\bm{\Omega}^{R,R}(\hat{\textbf{R}}^r)^T,\\
		\textbf{P}_f^{R,p}&=\textbf{M}_1\textbf{P}^{R,p}\textbf{M}_2^{T}+\hat{\textbf{R}}^r\bm{\Omega}^{R,p}(\hat{\textbf{R}}^r)^T,\\
		\textbf{P}_f^{p,p}&=\textbf{M}_2\textbf{P}^{p,p}\textbf{M}_2^{T}+\hat{\textbf{R}}^r\bm{\Omega}^{p,p}(\hat{\textbf{R}}^r)^T.
	\end{array}
\end{equation}

\subsection{Proof of Theorem 1}\label{Proof_Th1}
%
\begin{proof}
The observabilty matrix of the real system model (the ideal case: evaluated at the groundtruth) for RI-EKF is shown as
	\begin{equation}
		\begin{array}{rl}
			&\mathcal{\breve{\bm{O}}}^{RI}=\left[
			\begin{array}{cccccc}
				\breve{\textbf{H}}^{RI}_0\\
				\breve{\textbf{H}}^{RI}_1\breve{\textbf{F}}^{RI}_{0,0}\\
				\vdots\\
				\breve{\textbf{H}}^{RI}_{n+1}\breve{\textbf{F}}^{RI}_{n,0}
			\end{array}
			\right]\\
			&=\left[
			\begin{array}{cccccc}
				\breve{\textbf{H}}^{R,R^r}_0& \breve{\textbf{H}}^{R,R^f}_0& \textbf{0}_{3\times 3}& \textbf{0}_{3\times 3}\\
				\textbf{0}_{3\times 3}& \textbf{0}_{3\times 3}& \breve{\textbf{H}}^{p,p^r}_0& \breve{\textbf{H}}^{p,p^f}_0\\
				\vdots& \vdots& \vdots& \vdots\\
				\breve{\textbf{H}}^{R,R^r}_{n+1}& \breve{\textbf{H}}^{R,R^f}_{n+1}& \textbf{0}_{3\times 3}& \textbf{0}_{3\times 3}\\
				\textbf{0}_{3\times 3}& \textbf{0}_{3\times 3}&\breve{\textbf{H}}^{p,p^r}_{n+1}& \breve{\textbf{H}}^{p,p^f}_{n+1}\\
			\end{array}
			\right],
		\end{array}
	\end{equation}
	where $$\begin{aligned}
		&\breve{\textbf{H}}^{RI}_0=\left[
		\begin{array}{cccccc}
			-({\textbf{R}}_{0}^r)^T& ({\textbf{R}}_{0}^r)^T& \textbf{0}_{3\times 3}& \textbf{0}_{3\times 3}\\
			\textbf{0}_{3\times 3}& \textbf{0}_{3\times 3}& -({\textbf{R}}_{0}^r)^T& ({\textbf{R}}_{0}^r)^T
		\end{array}
		\right],\\
		&\breve{\textbf{H}}^{R,R^r}_{i+1}=\breve{\textbf{H}}^{p,p^r}_{i+1}=-({\textbf{R}}_{i+1}^r)^T,\ \forall i=0,\cdots, n,\\
		&\breve{\textbf{H}}^{R,R^f}_{i+1}=\breve{\textbf{H}}^{p,p^f}_{i+1}=({\textbf{R}}_{i+1}^r)^T,\ \forall i=0,\cdots, n,\\
	\end{aligned}$$
	$\breve{\textbf{H}}^{RI}_{i+1}$ is the Jacobian matrix for the $(i+1)$-th step observation model evaluated at the true state $\breve{\textbf{X}}^{RI}_{i+1}$, and $\breve{\textbf{F}}^{RI}_{i,0}=\breve{\textbf{F}}^{RI}_i\breve{\textbf{F}}^{RI}_{i-1}\cdots \breve{\textbf{F}}^{RI}_0$ is the identity matrix, $\breve{\textbf{F}}^{RI}_j$ is the Jacobian matrix for the $j$-th step propagation model evaluated at the true state $\breve{\textbf{X}}^{RI}_j$, $ j=0,\cdots, i$. 
	Based on the linearization method of RI-EKF introduced above, 
	we can obtain the unobservable subspace $\mathcal{\breve{N}}^{RI}$ by
	\begin{equation}
		\mathcal{\breve{N}}^{RI}=\mathop{\text{span}} _{col.}\left[
		\begin{array}{cccccc}
			\textbf{I}_{3}& \textbf{0}_{3,3}\\
			\textbf{I}_{3}& \textbf{0}_{3,3}\\
			\textbf{0}_{3,3}& \textbf{I}_{3}\\
			\textbf{0}_{3,3}& \textbf{I}_{3}
		\end{array}
		\right],
		\label{nullspace_app}
	\end{equation}
	whose dimension is 6.
	Then consider the observabilty matrix $\mathcal{\hat{\bm{O}}}^{RI}$ evaluated by RI-EKF estimations, 
	\begin{equation}
		\begin{array}{rl}
			&\mathcal{\hat{\bm{O}}}^{RI}=\left[
			\begin{array}{cccccc}
				\hat{\textbf{H}}^{RI}_0\\
				\hat{\textbf{H}}^{RI}_1\hat{\textbf{F}}^{RI}_{0,0}\\
				\vdots\\
				\hat{\textbf{H}}^{RI}_{n+1}\hat{\textbf{F}}^{RI}_{n,0}
			\end{array}
			\right]\\
			&=\left[
			\begin{array}{cccccc}
				\hat{\textbf{H}}^{R,R^r}_0& \hat{\textbf{H}}^{R,R^f}_0& \textbf{0}_{3\times 3}& \textbf{0}_{3\times 3}\\
				\textbf{0}_{3\times 3}& \textbf{0}_{3\times 3}& \hat{\textbf{H}}^{p,p^r}_0& \hat{\textbf{H}}^{p,p^f}_0\\
				\vdots& \vdots& \vdots& \vdots\\
				\hat{\textbf{H}}^{R,R^r}_{n+1}& \hat{\textbf{H}}^{R,R^f}_{n+1}& \textbf{0}_{3\times 3}& \textbf{0}_{3\times 3}\\
				\textbf{0}_{3\times 3}& \textbf{0}_{3\times 3}&\hat{\textbf{H}}^{p,p^r}_{n+1}& \hat{\textbf{H}}^{p,p^f}_{n+1}\\
			\end{array}
			\right],
		\end{array}
	\end{equation}
	where $$\begin{aligned}
		&\hat{\textbf{H}}^{RI}_0=\left[
		\begin{array}{cccccc}
			-(\hat{\textbf{R}}^r_{0|0})^T& (\hat{\textbf{R}}^r_{0|0})^T& \textbf{0}_{3\times 3}& \textbf{0}_{3\times 3}\\
			\textbf{0}_{3\times 3}& \textbf{0}_{3\times 3}& -(\hat{\textbf{R}}^r_{0|0})^T& (\hat{\textbf{R}}^r_{0|0})^T\\
		\end{array}\right],\\
		&\hat{\textbf{H}}^{R,R^r}_{i+1}=\hat{\textbf{H}}^{p,p^r}_{i+1}=-(\hat{\textbf{R}}_{i+1|i}^r)^T,\ \forall i=0,\cdots, n,\\
		&\hat{\textbf{H}}^{R,R^f}_{i+1}=\hat{\textbf{H}}^{p,p^f}_{i+1}=(\hat{\textbf{R}}_{i+1|i}^r)^T,\ \forall i=0,\cdots, n,\\
	\end{aligned}$$
	 $\hat{\textbf{H}}^{RI}_{i+1}$ is the Jacobian matrix for the $(i+1)$-th step observation model evaluated at the estimated state $\hat{\textbf{X}}^{RI}_{i+1|i}$, and $\hat{\textbf{F}}^{RI}_{i,0}=\hat{\textbf{F}}^{RI}_i\hat{\textbf{F}}^{RI}_{i-1}\cdots \hat{\textbf{F}}^{RI}_0$, $\hat{\textbf{F}}_j$  is the Jacobian matrix for the $j$-th step propagation model evaluated at the estimated state $\hat{\textbf{X}}^{RI}_{j|j}$, $j=0,\cdots i$. The unobservable subspace $\mathcal{\hat{N}}^{RI}$ of $\mathcal{\hat{\bm{O}}}^{RI}$  is the same as (\ref{nullspace_app}). 
\end{proof}

\subsection{Proof of Theorem 2}\label{Proof_Th2}
\begin{proof}
	The observabilty matrix of the stdandard EKF based on state estimates is constructed as
	\begin{equation}
		\mathcal{\hat{\bm{O}}}^{Std}=\left[
		\begin{array}{cccccc}
			\hat{\textbf{H}}^{Std}_0\\
			\hat{\textbf{H}}^{Std}_{1}\hat{\textbf{F}}^{Std}_{0,0}\\
			\vdots\\
			\hat{\textbf{H}}^{Std}_{n+1}\hat{\textbf{F}}^{Std}_{n,0}
		\end{array}
		\right]
	\end{equation}
	where $\hat{\textbf{H}}^{Std}_{i+1}$ is the Jacobian matrix evaluated at the prediction $\textbf{X}_{i+1|i}$ for the $(i+1)$-th step observation model in (\ref{FGH_std}), and
	\begin{equation}
		\begin{array}{rl}
			&\hat{\textbf{F}}^{Std}_{i,0}=\hat{\textbf{F}}^{Std}_i\hat{\textbf{F}}^{Std}_{i-1}\cdots \hat{\textbf{F}}^{Std}_0\\
			&=\left[
			\begin{array}{cccccc}
				\textbf{I}_6&\textbf{0}_{6\times 6}\\
				\left[\begin{array}{cccccc}
					\sum_{j=0}^i(\textbf{p}^r_{j|j}-\textbf{p}^r_{j+1|j})^{\land}&\textbf{0}_{3\times 3}\\
					\textbf{0}_{3\times 3}&\textbf{0}_{3\times 3}
				\end{array}
				\right]&\textbf{I}_6
			\end{array}
			\right]
		\end{array}
	\end{equation}
	$\hat{\textbf{F}}^{Std}_{j}$ is Jacobian evaluated at estimated state $\textbf{X}_{j|j}$ of process model for standard EKF in (\ref{FGH_std}). 
	
	In ideal case, the Jacobians are evaluated at the true state $\breve{\textbf{X}}_i$. Then, we have $$\textbf{p}^r_{i+1|i+1}= \textbf{p}^r_{i+1|i}$$ and $$\textbf{p}_{i+1|i}^f=\textbf{p}^f,\ \forall i.$$ And thus, $\hat{\textbf{H}}^{Std}_{i}\hat{\textbf{F}}^{Std}_{i-1,0}$ in $\mathcal{\breve{\bm{O}}}^{Std}$ for ideal case becomes  
	\begin{equation}
		\begin{array}{l}
			\hat{\textbf{H}}^{Std}_{i}\hat{\textbf{F}}^{Std}_{i-1,0}=\\
			\left[
			\begin{array}{cccccc}
				-({\textbf{R}}^r_{i})^T& ({\textbf{R}}^r_{i})^T& \textbf{0}_{3\times 3}& \textbf{0}_{3\times 3}\\
				-({\textbf{R}}^r_{i})^T(\textbf{p}^f-\textbf{p}^r_0)^{\land}& \textbf{0}_{3\times 3}& -({\textbf{R}}^r_{i})^T& ({\textbf{R}}^r_{i})^T\\
			\end{array}
			\right].
		\end{array}
	\end{equation}
	Then unobservable subspace based on the ground truth is obtained as 
	\begin{equation}
		\mathcal{\breve{N}}^{Std}=\mathop{\text{span}} _{col.}\left[
		\begin{array}{cccccc}
			\textbf{0}_{3,3}& \textbf{I}_{3}\\
			\textbf{0}_{3,3}& \textbf{I}_{3}\\
			\textbf{I}_{3}& (\textbf{p}^r_0)^{\land}\\
			\textbf{I}_{3}& (\textbf{p}^f)^{\land}
		\end{array}
		\right].
	\end{equation}
	Therefore, the dimension of unobservable subspace of system is 6. The same conclusion is for the case of multiple features.
	
	However, in practice, generally
	$$\begin{array}{ll}
		\textbf{p}^r_{i+1|i+1}&\neq \textbf{p}^r_{i+1|i},\\
		\textbf{p}^f_{i+1|i}&\neq \textbf{p}^f_{j+1|j},\ \forall i\neq j,
	\end{array} $$ and thus the unobservable subspace of standard EKF evaluated by the estimates becomes
	\begin{equation}
		\mathcal{\hat{N}}^{Std}=\mathop{\text{span}} _{col.}[\textbf{0}_{3\times 3},\textbf{0}_{3\times 3},\textbf{I}_3,\textbf{I}_3]^T,
	\end{equation}
	whose dimension is 3. 
\end{proof}


\end{document}